\documentclass{article}
\usepackage{iclr2027_conference,times}

\usepackage{amsmath,amsfonts,bm}
\usepackage{algorithm}
\usepackage{algorithmic}
\usepackage{amssymb}
\usepackage{amsmath}
\usepackage{booktabs}
\usepackage{multirow}
\usepackage{pifont}
\usepackage{colortbl}
\usepackage{graphicx}
\usepackage{amsthm}
\usepackage{subcaption}
\usepackage{nicematrix}
\newtheorem{proposition}{Proposition}[section]
\usepackage{wrapfig}
\usepackage{listings}

\def\eqref#1{equation~\ref{#1}}
\def\Eqref#1{Equation~\ref{#1}}
\def\1{\bm{1}}

\DeclareMathAlphabet{\mathsfit}{\encodingdefault}{\sfdefault}{m}{sl}
\SetMathAlphabet{\mathsfit}{bold}{\encodingdefault}{\sfdefault}{bx}{n}

\DeclareMathOperator*{\argmax}{arg\,max}

\usepackage{graphicx}
\usepackage{booktabs}
\usepackage{multirow}
\usepackage{hyperref}
\usepackage{url}
\usepackage{xspace}
\hypersetup{hidelinks}
\usepackage{marvosym}

\title{
ReCAP: \hspace{-0.05em}Retrieval-Guided
\hspace{-0.05em}Capability
\hspace{-0.05em}Reuse
\hspace{-0.05em}for
\hspace{-0.03em}Multimodal
\hspace{-0.03em}Continual
\hspace{-0.03em}Instruction
\hspace{-0.03em}Tuning
}

\author{
{Tao Hu$^{1,2}$  \quad
Zhinuo Zhou$^{3}$    \quad
Xialiang Tong$^{3}$    \quad
De-Chuan Zhan$^{1,2}$    \quad
Da-Wei Zhou$^{1,2}\textsuperscript{(\Letter)}$} \\
$^{1} $ School of Artificial Intelligence, Nanjing University \\
$^{2} $ State Key Laboratory for Novel Software Technology, Nanjing University \\
$^{3} $ Noah’s Ark Lab, Huawei Technologies \\
\texttt{\small \{hut, zhandc, zhoudw\}@lamda.nju.edu.cn}\\
\texttt{\small \{zhouzhinuo1, tongxialiang\}@huawei.com}
}

\newcommand{\method}{\textsc{ReCAP}\xspace}

\iclrfinalcopy
\begin{document}

\maketitle
\fancyhead[L]{Preprint.}

\begin{abstract} 
Multimodal continual instruction tuning (MCIT) aims to enable multimodal large language models to acquire new capabilities from sequential tasks while preserving previously learned knowledge. Existing methods primarily mitigate catastrophic forgetting by constraining parameter updates or separating task-specific adaptations. However, continual adaptation can also benefit from external knowledge that provides domain-specific information and reusable reasoning patterns for solving diverse instructions. For example, to answer ``How many red cubes are to the left of the sphere?'', domain knowledge can provide relevant concepts about objects and spatial relations, while reasoning knowledge can specify ordered operations such as object recognition, spatial filtering, and counting. Despite this potential, how to leverage external knowledge for continual adaptation remains largely unexplored in existing MCIT methods. To this end, we propose \method, a retrieval-guided framework that leverages external knowledge to guide capability reuse during continual adaptation. At each continual stage, \method uses external search and an LLM to incrementally build a knowledge base of domain, reasoning, and format knowledge based on the current-stage training data. For each instruction, retrieved domain knowledge guides generation, while retrieved reasoning knowledge selects and orders capability modules to form an instance-specific capability path. As these capability modules are reused across stages, subsequent adaptation can overwrite previously learned parameters. To enable stable cross-stage reuse, \method introduces adaptive subspace recycling, which parameterizes reusable capability modules with shared bases and stage-specific cores, protects historically important directions while recycling residual capacity.
Extensive experiments on MCIT benchmarks show that \method achieves SOTA performance.\looseness -1

\end{abstract}
\section{Introduction}

Multimodal large language models (MLLMs)~\citep{liu2023visual,dai2023instructblip} acquire broad vision-language capabilities through instruction tuning~\citep{zhang2026instruction,tong2025metamorph}, yet real-world deployment is inherently dynamic. New visual domains, question types, and reasoning requirements continuously emerge after initial training, while retraining on all previously observed data is often unavailable or prohibitively expensive. Multimodal continual instruction tuning (MCIT) therefore aims to enable models to acquire new capabilities from sequential tasks while preserving previously learned knowledge~\citep{chen2024coin}. Existing MCIT methods primarily mitigate catastrophic forgetting by constraining parameter updates or introducing task-specific adaptations, including prompt-based methods~\citep{zeng2025modalprompt}, low-rank modules~\citep{chen2024coin}, expandable components~\citep{guo2025hidellava}, and expert-based routing~\citep{huai2025clmoe,xie2026same}. While effective in reducing cross-task interference, these methods largely rely on model parameters and task-specific adaptation modules to encode and preserve what is learned across continual stages.

However, continual adaptation can also benefit from external knowledge. A natural way to access such knowledge is retrieval-augmented generation (RAG)~\citep{guu2020realm,lewis2020retrieval,asai2024selfrag}, which connects models to an editable external memory and allows knowledge beyond model parameters to be independently updated and retrieved. In MCIT, such knowledge can provide more than factual information, particularly for handling diverse and evolving task requirements. 
\begin{wrapfigure}{r}{0.3\textwidth}
    % \vspace{-3mm}
    \centering
    \includegraphics[width=\linewidth]{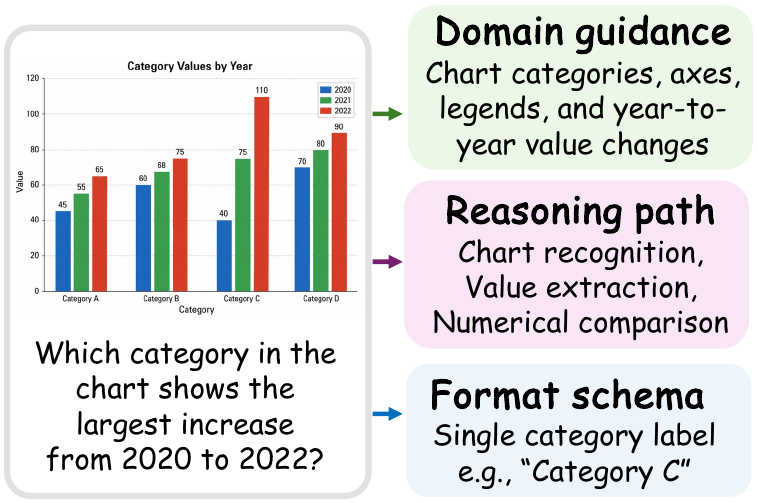}
    \vspace{-7mm}
    \caption{\small An example requiring chart
    understanding and comparison reasoning.}
    \vspace{-4mm}
    \label{fig:example}
\end{wrapfigure}
Domain knowledge can supply relevant concepts, relations, and visual context for the current multimodal instruction, while reasoning knowledge can describe the ordered operations needed to solve it. For example, as illustrated in Figure~\ref{fig:example}, answering ``Which category in the chart shows the largest increase from 2020 to 2022?'' may require knowledge of chart conventions, axes, and legends, together with a reasoning process involving visual element recognition, value extraction, and comparison. Output conventions can further characterize the expected response form. Despite these benefits, existing MCIT methods have largely underexplored how external knowledge can support continual adaptation.

The key challenge, therefore, is how to make such external knowledge actionable for continual adaptation. Reasoning knowledge is particularly useful in this regard, as its ordered operations naturally reveal the capabilities required to solve an instruction. Vanilla RAG~\citep{lewis2020retrieval} primarily conditions generation on retrieved information, but does not explicitly organize these operations into reusable capability units. Representing these operations as parameterized capability modules provides an explicit interface through which retrieval can select and compose reusable capabilities for each instruction, while allowing recurring capabilities to be reused across continual stages. However, such parameterized reuse introduces a new continual-learning challenge: adapting a recurring capability at later stages may overwrite previously learned behaviors. Effective knowledge-guided MCIT therefore requires both retrieval-guided capability composition and an adaptive mechanism for preserving reusable capabilities during continual adaptation.

Motivated by these considerations, we propose \method, a retrieval-guided framework for capability reuse in MCIT. At each continual stage, \method uses the current-stage training data, external search, and an LLM to incrementally build a structured knowledge base of domain, reasoning, and format cards. For each multimodal instruction, retrieved domain knowledge provides prompt guidance for generation, while retrieved reasoning knowledge identifies the required capability operations and organizes the corresponding parameterized modules into an instance-specific path. The selected modules are then composed sequentially along this path during the forward pass. Format knowledge provides response-form cues that further support reasoning retrieval and inference-time core routing. To support stable reuse of recurring capabilities across continual stages, \method introduces adaptive subspace recycling, which represents them with shared bases and stage-specific cores. Directions carrying high accumulated historical energy are protected from subsequent interference, while low-energy directions are recycled and new directions are appended to provide capacity for further adaptation. Finally, to enable task-free use of the retained stage-specific cores, retrieval-guided routing selects an appropriate core for each activated capability at inference.

In summary, our contributions are threefold: (1) we introduce retrieval-guided capability reuse for MCIT, where external knowledge provides generation guidance and organizes instance-level capability composition and reuse; (2) we propose adaptive subspace recycling for stable cross-stage capability reuse through shared bases, stage-specific cores, and protected historical directions; and (3) experiments on MCIT benchmarks demonstrate \method's effectiveness over representative baselines.\looseness -1

\section{Related Work}

\textbf{Multimodal continual instruction tuning:}
MCIT extends continual learning from fixed-label prediction to unified
vision--language instruction following. CoIN introduced a representative
benchmark and mixture-of-expert LoRA adaptation for sequential multimodal tasks
~\citep{chen2024coin}. Subsequent methods explore different strategies to
balance stability and plasticity, including modality-aware prompting
~\citep{zeng2025modalprompt}, expert specialization and routing
~\citep{huai2025clmoe}, expandable task-specific components with transferable
representations~\citep{guo2025hidellava}, and joint stabilization of routing
and parameter updates~\citep{xie2026same}. Orthogonal low-rank updates provide
another parameter-isolation strategy for reducing interference between
continual updates~\citep{wang2023orthogonal}. Meanwhile, LoRA-based adaptation
~\citep{hu2022lora} has become a common approach for storing compact task
adaptations. These methods mainly focus on parameter-space
solutions, either separating adaptations across tasks or constraining updates
to preserve previous knowledge.

\textbf{Retrieval-augmented generation:}
Retrieval-augmented generation combines a parametric model with an external, editable memory, enabling models to access and update knowledge beyond their parameters. Existing studies explore retrieval-enhanced pretraining~\citep{borgeaud2022improving}, knowledge-intensive generation, multi-passage fusion~\citep{izacard2021leveraging}, adaptation over external corpora~\citep{izacard2022atlas}, query transformation~\citep{gao2023hyde}, and retrieval-aware generation evaluation~\citep{asai2024selfrag}. These methods primarily focus on improving knowledge access, factual grounding, and generation quality. In contrast, the role of retrieval as a mechanism for organizing continual capability acquisition and reuse in MCIT remains largely unexplored.

\section{Preliminaries}
\label{sec:preliminaries}

\subsection{Multimodal Continual Instruction Tuning}

Multimodal continual instruction tuning (MCIT) considers a sequence of
multimodal instruction tuning tasks. Specifically, the training stream consists
of \(T\) datasets:
\begin{equation}
    \mathcal{D}_{1:T}=\{\mathcal{D}_t\}_{t=1}^{T}, \qquad
    \mathcal{D}_t=\{(\mathbf{v}_i^t,\mathbf{q}_i^t,
    \mathbf{y}_i^t)\}_{i=1}^{n_t},
    \label{eq:stream}
\end{equation}
where \(\mathbf{v}\), \(\mathbf{q}\), and \(\mathbf{y}\) denote an image,
an instruction, and its target response, respectively. At stage \(t\), the
learner observes only the current dataset \(\mathcal{D}_t\) and sequentially
updates the model. After learning stage \(t\), the model is evaluated on all
observed tasks \(\mathcal{D}_{1:t}\), requiring it to acquire new capabilities
while preserving previously learned knowledge.

\subsection{Baselines in Multimodal Continual Instruction Tuning}

To mitigate catastrophic forgetting, existing MCIT methods mainly follow two
typical strategies: constraining parameter updates or separating task-specific
adaptations.
\\\noindent\textbf{Constraining Parameter Updates:}
Continually updating shared adaptation parameters may interfere with previously
learned knowledge. Methods such as O-LoRA~\citep{wang2023orthogonal} and
SAME~\citep{xie2026same} therefore restrict current-stage updates using
information retained from previous stages. The constrained update is
represented as:
\begin{equation}
\Delta\boldsymbol{\theta}_{t}^{\mathrm{c}}
=
\mathcal{C}
\left(
\Delta\boldsymbol{\theta}_{t};
\mathcal{H}_{<t}
\right),
\qquad
\boldsymbol{\theta}_{t}
=
\boldsymbol{\theta}_{t-1}
+
\Delta\boldsymbol{\theta}_{t}^{\mathrm{c}},
\label{eq:constrained-update}
\end{equation}

where \(\Delta\boldsymbol{\theta}_{t}\) denotes the update on the
stage $t$, \(\mathcal{H}_{<t}\) represents historical information from previous stages, and \(\mathcal{C}\) constrains the current update according
to this historical information. In this way, the model limits interference
with previously learned knowledge while adapting to the current stage.
\\\noindent\textbf{Separating Task-Specific Adaptations:}
To avoid overwriting previous adaptations, methods such as CL-MoE~\citep{huai2025clmoe} and HiDe-LLaVA~\citep{guo2025hidellava} maintain specialized or expandable components for different task requirements. Their forward computation is represented as:
\begin{equation}
\mathbf{o}
=
f_{0}(\mathbf{x})
+
\sum_{m=1}^{M_t}
r_m(\mathbf{x})\mathcal{A}_m(\mathbf{x}),
\label{eq:separated-adaptation}
\end{equation}
where \(f_{0}\) denotes the frozen backbone,
\(\{\mathcal{A}_m\}_{m=1}^{M_t}\) denotes the adaptation components available
after stage \(t\), and \(r_m(\mathbf{x})\) controls the selection or
contribution of each component. In this way, task-specific adaptations are selectively activated to reduce interference across continual stages.
\\\noindent\textbf{Discussion:}
Equations~\ref{eq:constrained-update} and~\ref{eq:separated-adaptation}
represent two typical parameter-space strategies for mitigating catastrophic
forgetting in MCIT. Constraining shared parameter updates helps preserve
previously learned knowledge, but may restrict the flexibility required to
acquire new capabilities. Separating task-specific adaptations reduces direct
interference, but can duplicate reusable knowledge across different
components. More importantly, both strategies rely primarily on model
parameters and adaptation modules to organize continual learning, without
exploiting external knowledge that can provide domain information and
reasoning operations for individual instructions. Such knowledge can further
reveal capabilities that are reusable within and across tasks. Hence, an
effective MCIT framework should leverage external knowledge to guide capability
acquisition and reuse while preserving previously learned behaviors across
continual stages.

\section{Method}
\label{sec:method}

\method connects external knowledge with continual parameter adaptation through reusable capabilities. We first incrementally build a structured knowledge base, where retrieved domain knowledge provides generation guidance and reasoning knowledge provides ordered capability operations for composing capability-specific modules. Afterward, to enable stable reuse of recurring capabilities, \method introduces adaptive subspace recycling, which protects historically important directions while retaining residual capacity for subsequent adaptation.

\subsection{Knowledge Base Construction and Retrieval}
\label{sec:kb}

\noindent\textbf{Knowledge Card Construction:} At each continual stage \(j\), we construct a stage-local knowledge shard from the current training instructions in \(\mathcal{D}_j\). We first group instructions by their inferred answer schema, then encode them with a frozen embedding model \(\phi\) and cluster semantically similar instructions within each group. 
Representative instructions from each cluster, together with external search results that provide visual context and background information, are then summarized by an LLM.
This process produces three types of knowledge cards: domain cards contain relevant concepts, relations, and concise \textit{prompt guidance}; reasoning cards contain ordered capability operations; and format cards specify the expected answer schema. 
Detailed construction prompts and card schemas are provided in Appendix~\ref{app:knowledge}. Let \(\mathcal{K}^{j}_{c}\) denote the stage-\(j\) cards of type \(c\in\{\mathrm{dom},\mathrm{rea},\mathrm{fmt}\}\). At stage \(t\), the available knowledge is accumulated only from the observed stages:
\begin{equation}
    \mathcal{K}^{1:t}_{c}
    =
    \bigcup_{j=1}^{t}\mathcal{K}^{j}_{c},
    \qquad
    c\in\{\mathrm{dom},\mathrm{rea},\mathrm{fmt}\}.
    \label{eq:causal-kb}
\end{equation}
Accordingly, training and evaluation at stage \(t\) use only \(\mathcal{K}^{1:t}_{c}\), without access to future-stage instructions or cards. For subsequent retrieval, each domain and reasoning card is associated with one or more cluster prototypes obtained by averaging the embeddings of its source instructions. The same frozen embedding model \(\phi\) is used to encode retrieval queries.

\noindent\textbf{Knowledge Retrieval:} For each instruction \(\mathbf{q}\), \method first retrieves a domain card and a format card, whose information is then used as additional cues for reasoning-card retrieval.
For a candidate domain or reasoning card \(k\), the retrieval score combines similarity to its associated prototypes with type-specific card-content signals capturing semantic, lexical, and structured compatibility:
\begin{align}
    s_{\mathrm{proto}}(\mathbf{q},k)
    &=
    \max_{\mathbf{p}\in\mathcal{P}(k)}
    \cos\!\left(\phi(\mathbf{q}),\mathbf{p}\right), \\
    s_c(\mathbf{q},k)
    &=
    \alpha_c s_{\mathrm{proto}}(\mathbf{q},k)
    +(1-\alpha_c)
    \sum_{m\in\mathcal{M}_c}
    \lambda_{c,m}s_m(\mathbf{q},k),
    \qquad
    c\in\{\mathrm{dom},\mathrm{rea}\}.
    \label{eq:retrieval}
\end{align}
Here \(\mathcal{P}(k)\) contains the prototypes associated with card \(k\), while \(\mathcal{M}_c\) indexes the type-specific card-content signals used for card type \(c\). The coefficient \(\alpha_c\) balances prototype similarity and content matching, and \(\lambda_{c,m}\) weights each normalized signal \(s_m\). Domain retrieval uses card-text semantic similarity, keyword overlap, and concept overlap. Format cards are retrieved separately using keyword and answer-schema matching, with explicit response requirements prioritized. Reasoning retrieval further combines semantic, keyword, and concept matching with compatibility to the retrieved domain and format information. We retain the highest-scoring card of each type. All retrieval signals and fixed coefficients are detailed in Appendix~\ref{app:routing}. The left panel of Figure~\ref{fig:overview} illustrates this stage-wise knowledge construction and instruction-conditioned retrieval process.

\begin{figure*}[t]
    \centering
\vspace{-2mm}    
\includegraphics[width=\textwidth,pagebox=cropbox]{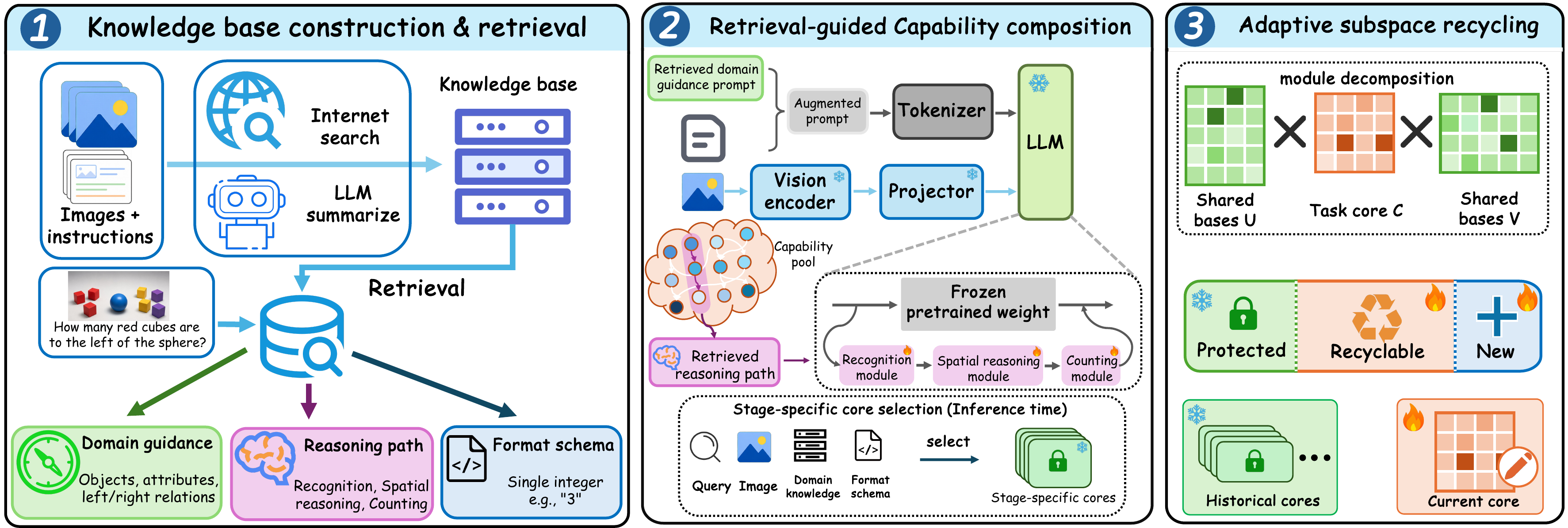}
\vspace{-6mm}
\caption{\small Overview of \method. (1) At each continual stage, a structured knowledge base is incrementally constructed from the current training instructions with external search and LLM summarization; the current instruction retrieves domain guidance, a reasoning path, and a format schema. (2) Domain guidance augments the model prompt, while the retrieved reasoning path selects and sequentially composes parameterized modules from a shared capability pool. At inference time, instruction, retrieved knowledge, visual, and path-transition cues select a stage-specific core for each activated capability. (3) Adaptive subspace recycling represents recurring capabilities with shared bases and stage-specific cores, protecting high-energy historical directions, recycling low-energy directions, and appending new directions for subsequent adaptation.}
    \label{fig:overview}
    \vspace{-6mm}
\end{figure*}

\subsection{Retrieval-Guided Capability Composition}
\label{sec:interface}

After knowledge retrieval, we keep the pretrained backbone frozen and maintain a shared capability set \(\mathcal{S}\). Each capability \(s\in\mathcal{S}\) represents a canonical reusable operation shared across continual stages, such as visual recognition, spatial reasoning, or counting, and is implemented by a capability-specific parameterized module in the adapted LLM layers. Retrieved knowledge determines which capabilities are activated for each instance and how they are composed.

\noindent\textbf{Domain Guidance:} Given the retrieved domain card \(k_{\mathrm{dom}}^*\), its \textit{prompt guidance} field is concatenated with the original instruction:
\begin{equation}
    \widetilde{\mathbf{q}}
    =
    \operatorname{concat}\!\left(
        \operatorname{guidance}(k_{\mathrm{dom}}^*),
        \mathbf{q}
    \right).
    \label{eq:guidance}
\end{equation}

\noindent\textbf{Capability Path:} The retrieved reasoning card specifies an ordered sequence of capability operations, each mapped to a canonical capability in \(\mathcal{S}\) during path construction. The resulting capabilities define an instance-specific capability path:
\begin{equation}
    \pi(\mathbf{q})=(s_1,s_2,\ldots,s_L),
    \qquad
    s_j\in\mathcal{S},
    \label{eq:path}
\end{equation}
where \(L\) is the path length and each \(s_j\) denotes a reusable capability. For example, an instruction involving object identification, spatial filtering, and counting may yield \(\text{visual recognition}\rightarrow\text{spatial reasoning}\rightarrow\text{counting}\).
During training at stage \(t\), capability \(s_j\) at adapted layer \(\ell\) is parameterized by \(A_{s_j,t}^{\ell}\).
% At adapted layer \(\ell\), the stage-\(t\) parameterization of capability \(s_j\) is denoted by \(A_{s_j,t}^{\ell}\).
Given an input representation \(\mathbf{x}\), the selected capability modules are applied sequentially along the retrieved path:
\begin{equation}
    \mathbf{z}_0=\mathbf{x},\qquad
    \mathbf{z}_j
    =
    \mathbf{z}_{j-1}
    +
    A_{s_j,t}^{\ell}\mathbf{z}_{j-1},
    \quad j=1,\ldots,L.
    \label{eq:sequential-composition}
\end{equation}
The adapted layer output is:
\begin{equation}
    \mathbf{o}
    =
    W_0^{\ell}\mathbf{x}
    +
    \mathbf{z}_L-\mathbf{z}_0,
    \label{eq:sequential-forward}
\end{equation}
where \(W_0^{\ell}\) is the frozen layer weight. Each capability operates on the representation produced by preceding capabilities, preserving the ordered computation specified by the retrieved reasoning path. The middle panel of Figure~\ref{fig:overview} illustrates this retrieval-guided capability composition process.

\subsection{Adaptive Subspace Recycling}
\label{sec:subspace-recycling}
\label{sec:factorization}
\label{sec:protection}

The same capability may recur across continual stages. Using a separate module for each recurrence prevents sharing, whereas updating a single shared module risks overwriting previously learned behaviors. To support stable reuse, \method represents each recurring capability with shared basis parameters and stage-specific cores, and applies adaptive subspace recycling to protect historically important directions while retaining trainable capacity for subsequent adaptation.

\noindent\textbf{Core Factorization:} For capability \(s\), we define its parameterization at layer \(\ell\) and stage \(t\) as \(A_{s,t}^{\ell}=U_{s}^{\ell}C_{s,t}^{\ell}V_{s}^{\ell}\), where \(A_{s,t}^{\ell}\in\mathbb{R}^{d_{\mathrm{out}}\times d_{\mathrm{in}}}\), \(U_s^{\ell}\in\mathbb{R}^{d_{\mathrm{out}}\times K_s}\), \(C_{s,t}^{\ell}\in\mathbb{R}^{K_s\times K_s}\), and \(V_s^{\ell}\in\mathbb{R}^{K_s\times d_{\mathrm{in}}}\). Here \(d_{\mathrm{in}}\) and \(d_{\mathrm{out}}\) denote the input and output dimensions, and \(K_s\) is the rank. \(U_s^{\ell}\) and \(V_s^{\ell}\) are shared across stages, while \(C_{s,t}^{\ell}\) is the stage-specific core of capability \(s\).

\noindent\textbf{Energy-Ordered Reparameterization:} Directly protecting individual basis coordinates is unreliable because equivalent low-rank factorizations can represent the same capability parameterization using different coordinates. For a recurring capability \(s\) at one adapted layer, we therefore first construct an orthonormal coordinate system. 
For clarity, we omit the layer superscript \(\ell\) and write \(U\) and \(V\) for the active shared bases.
Reduced QR decomposition gives:
\begin{equation}
    U=Q_U R_U,
    \qquad
    V^\top=Q_V R_V.
    \label{eq:qr}
\end{equation}
Let \(\mathcal{T}_s^{t-1}\) denote the previous stages in which capability \(s\) has a stored core. For each \(\tau\in\mathcal{T}_s^{t-1}\), the historical core is expressed in the QR coordinates as \(B_{s,\tau}=R_U C_{s,\tau}R_V^\top\). We then aggregate historical energy across these cores along the left and right coordinates:
\begin{equation}
    M_U
    =
    \sum_{\tau\in\mathcal{T}_s^{t-1}}
    B_{s,\tau}B_{s,\tau}^{\top}
    =
    P_U\Lambda_U P_U^\top, \qquad
    M_V
    =
    \sum_{\tau\in\mathcal{T}_s^{t-1}}
    B_{s,\tau}^{\top}B_{s,\tau}
    =
    P_V\Lambda_V P_V^\top.
    \label{eq:energy}
\end{equation}
Here \(P_U\) and \(P_V\) contain the corresponding eigenvectors, while \(\Lambda_U=\operatorname{diag}(\lambda_{U,1},\ldots)\) and \(\Lambda_V=\operatorname{diag}(\lambda_{V,1},\ldots)\) contain nonnegative eigenvalues sorted in descending order. Each eigenvalue measures the accumulated historical energy along the corresponding left or right direction. We rotate the shared bases and historical cores into these energy-ordered coordinates:
\begin{equation}
    \widetilde{U}=Q_U P_U,
    \qquad
    \widetilde{V}=P_V^\top Q_V^\top,
    \qquad
    \widetilde{C}_{s,\tau}=P_U^\top B_{s,\tau}P_V.
    \label{eq:rotation}
\end{equation}
Proposition~\ref{prop:exact-preservation} shows that the QR reparameterization and energy-based rotation leave every historical capability parameterization unchanged. These operations therefore only establish an energy-ordered coordinate system for determining which directions to protect and recycle.

\begin{figure*}[t]
    \centering
    \vspace{-3mm}    \includegraphics[width=\linewidth,pagebox=cropbox]{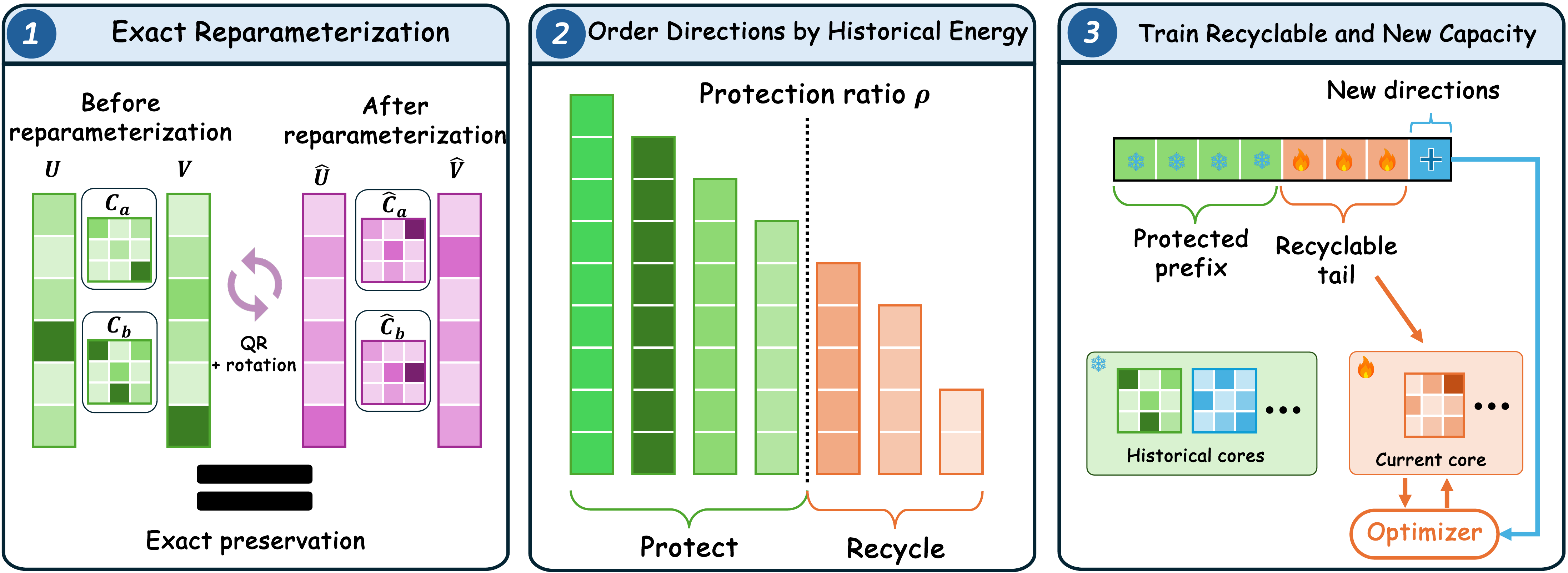}
    \vspace{-7mm}
    \caption{\small Adaptive subspace recycling. Historical capability parameterizations are re-expressed in energy-ordered coordinates without changing their represented functions. High-energy directions are protected, low-energy directions are recycled as trainable capacity, and new directions are appended for subsequent adaptation.}
    \label{fig:subspace-recycling}
    \vspace{-7mm}
\end{figure*}

\noindent\textbf{Protection and Recycling:} Let \(\rho\in(0,1]\) denote the fraction of historical energy to preserve. Since the eigenvalues are ordered by historical energy, the protected ranks \(p_U\) and \(p_V\) are chosen as the smallest values satisfying:
\begin{equation}
    p_U
    =
    \min\left\{
        p:
        \frac{\sum_{j=1}^{p}\lambda_{U,j}}
             {\sum_j\lambda_{U,j}}
        \geq\rho
    \right\},
    \qquad
    p_V
    =
    \min\left\{
        p:
        \frac{\sum_{j=1}^{p}\lambda_{V,j}}
             {\sum_j\lambda_{V,j}}
        \geq\rho
    \right\}.
    \label{eq:protected-rank}
\end{equation}
The prefixes \(\widetilde{U}[:,1{:}p_U]\) and \(\widetilde{V}[1{:}p_V,:]\) capture the dominant historical directions and are frozen during subsequent training. The remaining pre-existing directions form the recyclable subspace: they retain their current values but remain trainable for the new stage. Proposition~\ref{prop:residual-energy} shows that the historical energy outside each protected subspace is at most a fraction \(1-\rho\) of the corresponding total historical energy. A newly observed capability is initialized with active rank \(K_0\). When capability \(s\) recurs, let \(K^-\) denote its active rank before expansion and set \(K^+=\min(K^-+\delta,K_{\max})\). Historical cores and protected basis prefixes remain frozen, while the current-stage core and basis slices \(\widetilde{U}[:,p_U{+}1{:}K^+]\) and \(\widetilde{V}[p_V{+}1{:}K^+,:]\) are trainable. These trainable slices comprise the recyclable low-energy directions within the previous rank \(K^-\) and up to \(\delta\) newly appended directions. Historical cores are zero-padded after rank expansion, which preserves their represented parameterizations before further training as shown in Proposition~\ref{prop:exact-preservation}. Thus, recycling reuses low-energy directions as trainable capacity, while expansion provides additional directions when needed. The right panel of Figure~\ref{fig:overview} illustrates this protection, recycling, and expansion process.

\subsection{Summary}
\label{sec:summary}
\label{sec:core-routing}

During continual training, each capability in the retrieved path uses its current-stage cores \(C_{s,t}^{\ell}\). After stage \(t\), these cores are retained as a stage-specific realization of capability \(s\), together with routing signatures based on the instruction, retrieved domain and format information, visual input, and capability-transition context. At inference time, \method retrieves the knowledge cards and obtains the capability path \(\pi(\mathbf{q})=(s_1,\ldots,s_L)\). Let \(\mathcal{T}_s^t\) denote the stages up to \(t\) that contain a stored realization of capability \(s\). Each candidate \(\tau\in\mathcal{T}_s^t\) is scored as:
\begin{equation}
    r_{s,\tau}
    =
    \frac{
        \sum_{g\in\mathcal{G}(\mathbf{q},\mathbf{v})}
        \beta_g r_g(s,\tau)
    }{
        \sum_{g\in\mathcal{G}(\mathbf{q},\mathbf{v})}
        \beta_g
    },
    \label{eq:core-routing}
\end{equation}
where \(\mathcal{G}(\mathbf{q},\mathbf{v})\) contains the available routing cues, including instruction similarity, retrieved domain and format information, visual similarity, and capability-transition compatibility. Here \(r_g(s,\tau)\) is the normalized score for cue \(g\), and \(\beta_g\) is its fixed weight. We select \(\widehat{\tau}_s=\argmax_{\tau\in\mathcal{T}_s^t}r_{s,\tau}\) and use the corresponding per-layer cores \(C_{s,\widehat{\tau}_s}^{\ell}\) to recover the selected stage-specific realization of capability \(s\). The selected capability modules are then composed along \(\pi(\mathbf{q})\), enabling historical capability reuse.

\section{Experiments}
\label{sec:experiments}

\subsection{Experimental Setup}

\noindent\textbf{Datasets:}
We evaluate on the CoIN~\citep{chen2024coin} and UCIT~\citep{guo2025hidellava}
benchmarks. CoIN contains eight heterogeneous vision--language tasks:
ScienceQA~\citep{lu2022learn}, TextVQA~\citep{singh2019towards},
ImageNet~\citep{deng2009imagenet}, GQA~\citep{hudson2019gqa},
VizWiz~\citep{gurari2018vizwiz}, Grounding (RefCOCO)
~\citep{kazemzadeh2014referitgame,mao2016generation},
VQAv2~\citep{goyal2017making}, and OCR-VQA~\citep{mishra2019ocr}.
We follow the CoIN task order and evaluate after
each continual stage. UCIT contains six tasks in the order
ImageNet-R~\citep{hendrycks2021many},
ArxivQA~\citep{li2024multimodal}, VizWiz-Caption~\citep{gurari2018vizwiz},
IconQA~\citep{lu2021iconqa}, CLEVR~\citep{lindstrom2022clevr}, and
Flickr30k~\citep{plummer2015flickr30k}; we likewise evaluate after every stage.

\noindent\textbf{Metrics:}
Following~\cite{chen2024coin}, \(\mathcal{A}_{i,t}\) denotes the performance on task \(i\) after stage \(t\). Final average performance is defined as \(\overline{\mathcal{A}}=\frac1T\sum_{i=1}^{T}\mathcal{A}_{i,T}\). For \(t>1\), average forgetting is measured by \(B_t=\frac1{t-1}\sum_{i=1}^{t-1}(\mathcal{A}_{i,i}-\mathcal{A}_{i,t})\). Higher \(\overline{\mathcal{A}}\) and lower \(B_t\) are preferred, while negative \(B_t\) indicates backward transfer.

\noindent\textbf{Baselines:}
We compare with representative continual adaptation methods. LoRA-FT
~\citep{hu2022lora} provides a standard parameter-efficient baseline, while
O-LoRA~\citep{wang2023orthogonal} and MoELoRA~\citep{chen2024coin} study
orthogonalized updates and mixtures of low-rank experts. ModalPrompt
~\citep{zeng2025modalprompt}, CL-MoE~\citep{huai2025clmoe}, HiDe-LLaVA
~\citep{guo2025hidellava}, SEFE~\citep{chen2025sefe}, and SAME
~\citep{xie2026same} cover prompt-based adaptation, modular composition, and
mechanisms for preserving or routing previously learned knowledge.

\begin{table*}[t]
\vspace{-2mm}
\centering
\small
\renewcommand{\arraystretch}{1.08}
\setlength{\tabcolsep}{12.5pt}
\caption{\small Comparison with existing methods on UCIT. Task-wise results report final performance \(\mathcal{A}_{i,T}\). Best and second-best values among the main comparison methods are bolded and underlined, respectively; Vanilla RAG variants are shown for reference.}
\vspace{-3mm}
\label{tab:main}
\resizebox{\textwidth}{!}{%
\begin{NiceTabular}{@{}l|cccccc|c}
\toprule
\textbf{Method} & \textbf{ImageNet-R} & \textbf{ArxivQA} & \textbf{VizWiz}
& \textbf{IconQA} & \textbf{CLEVR} & \textbf{Flickr30k}
& \(\boldsymbol{\overline{\mathcal{A}}}\)~$\uparrow$ \\
\midrule
LoRA-FT~\citep{hu2022lora}
& 58.03 & 77.63 & 44.39 & 67.40 & 61.77 & \textbf{58.22} & 61.24 \\
O-LoRA~\citep{wang2023orthogonal}
& 77.50 & 78.07 & 44.50 & 63.13 & \underline{64.73} & \underline{58.16} & 64.35 \\
MoELoRA~\citep{chen2024coin}
& 70.07 & 77.70 & 44.69 & 50.03 & 54.03 & 57.34 & 58.98 \\
ModalPrompt~\citep{zeng2025modalprompt}
& 51.07 & 87.27 & 48.11 & 39.23 & 46.57 & 42.93 & 52.53 \\
CL-MoE~\citep{huai2025clmoe}
& 66.33 & 77.00 & 44.78 & 51.87 & 53.53 & 57.42 & 58.49 \\
HiDe-LLaVA~\citep{guo2025hidellava}
& \underline{84.03} & 90.73 & 44.43 & 58.93 & 41.37 & 54.25 & 62.29 \\
SEFE~\citep{chen2025sefe}
& 80.83 & 78.00 & 47.01 & \textbf{69.63} & \textbf{65.83} & 57.92 & 66.54 \\
SAME~\citep{xie2026same}
& 83.83 & \underline{91.40} & \underline{51.33} & 65.27 & 53.50 & 57.43 & \underline{67.12} \\
\midrule
\rowcolor{gray!8}
{LoRA-FT + Vanilla RAG}
& 37.33 & 79.06 & 44.57 & 71.13 & 63.07 & 58.07 & 58.87 \\
\rowcolor{gray!8}
{SAME + Vanilla RAG}
& 82.93 & 87.60 & 52.61 & 60.64 & 53.33 & 57.32 & 65.74 \\
\midrule
\rowcolor{blue!6}
\method (Ours) & \textbf{85.57} & \textbf{92.60} & \textbf{60.56}
& \underline{68.53} & 60.57 & 56.09 & \textbf{70.65} \\
\bottomrule
\end{NiceTabular}
}\vspace{-6mm}
\end{table*}

\noindent\textbf{Implementation details:} We use LLaVA-v1.5-7B with the CLIP ViT-L/14-336 visual encoder~\citep{liu2023visual,radford2021learning}. We adapt the attention projections while keeping the backbone frozen. Each stage is trained for one epoch with a batch size of 8 and a learning rate of \(2\times10^{-4}\). The factorized capability modules are initialized with rank \(K_0=8\) and expanded by \(\delta=1\) whenever a capability recurs, with \(\rho=0.99\); the maximum active rank is 13 on UCIT and 15 on CoIN. Qwen3-Embedding-4B encodes instructions and structured knowledge cards. For each instruction, retrieval selects one domain, reasoning, and format card. The knowledge base is constructed offline using external search and an LLM, and at stage \(t\), only shards from observed stages \(1{:}t\) are available during training and evaluation. Retrieval and core-routing coefficients are provided in Appendix~\ref{app:routing}.

\subsection{Benchmark Comparison}

\begin{table*}[t]
\vspace{-2mm}
\centering
\small
\renewcommand{\arraystretch}{1.08}
\setlength{\tabcolsep}{5.2pt}
\caption{\small Comparison with existing methods on CoIN. Task-wise results report final performance \(\mathcal{A}_{i,T}\). Best and second-best values among the main comparison methods are bolded and underlined, respectively; Vanilla RAG variants are shown for reference.}
\vspace{-3mm}
\label{tab:coin}
\resizebox{\textwidth}{!}{%
\begin{NiceTabular}{@{}l|cccccccc|c}
\toprule
\textbf{Method} & \textbf{ScienceQA} & \textbf{TextVQA} & \textbf{ImageNet}
& \textbf{GQA} & \textbf{VizWiz} & \textbf{Grounding} & \textbf{VQAv2}
& \textbf{OCR-VQA} & \(\boldsymbol{\overline{\mathcal{A}}}\)~$\uparrow$ \\
\midrule
LoRA-FT~\citep{hu2022lora}
& 26.00 & 25.38 & 28.51 & 33.07 & 26.52 & 0.10 & 40.00 & 52.92 & 29.06 \\
O-LoRA~\citep{wang2023orthogonal}
& 75.40 & 52.89 & 71.85 & 47.30 & 37.35 & 7.10 & 61.85 & 61.20 & 51.87 \\
MoELoRA~\citep{chen2024coin}
& 62.02 & 52.05 & 37.21 & 53.12 & 43.32 & 33.22 & 57.92 & \underline{65.75} & 50.58 \\
ModalPrompt~\citep{zeng2025modalprompt}
& 68.42 & 56.40 & 41.13 & 61.11 & 50.13 & 36.69 & 66.90 & 59.68 & 55.06 \\
CL-MoE~\citep{huai2025clmoe}
& 73.28 & 59.94 & 31.80 & 60.22 & 46.98 & \underline{64.48} & \textbf{67.36} & 62.39 & 58.31 \\
SEFE~\citep{chen2025sefe}
& 75.35 & 58.66 & 83.10 & 54.25 & 48.85 & 16.75 & 65.35 & \textbf{66.25} & 58.57 \\
HiDe-LLaVA~\citep{guo2025hidellava}
& 73.20 & 56.92 & 69.28 & \underline{61.33} & 50.76 & 59.18 & \underline{67.12} & 64.76 & 62.82 \\
SAME~\citep{xie2026same}
& \underline{78.35} & \underline{60.69} & \underline{90.21} & \textbf{61.70}
& \underline{54.13} & {59.87} & 66.04 & 63.59 & \underline{66.82} \\
\midrule
\rowcolor{gray!8}
{LoRA-FT + Vanilla RAG}
& 24.16& 26.14& 18.69& 39.60& 24.85& 6.74& 48.09& 50.88& 29.89  \\
\rowcolor{gray!8}
{SAME + Vanilla RAG}
& 66.51& 60.26& 90.25& 61.55& 61.23& 59.79& 65.98& 60.70& 65.78 \\
\midrule
\rowcolor{blue!6}
\method (Ours)
& \textbf{79.70} & \textbf{61.89} & \textbf{96.89} & 61.12
& \textbf{58.12} & \textbf{71.18} & 65.97 & 57.90 & \textbf{69.10} \\
\bottomrule
\end{NiceTabular}%
}\vspace{-7mm}
\end{table*}
Tables~\ref{tab:main} and~\ref{tab:coin} report final task-wise performance on UCIT and CoIN, respectively. On UCIT, \method achieves the highest final average performance of 70.65, outperforming SAME by 3.53 points, with the best results on ImageNet-R, ArxivQA, and VizWiz. Although several baselines remain stronger on individual tasks such as IconQA, CLEVR, and Flickr30k, \method delivers the strongest overall performance across the heterogeneous task stream. On the longer CoIN stream, \method achieves an average of 69.10, exceeding SAME by 2.28 points, and obtains the best results on ScienceQA, TextVQA, ImageNet, VizWiz, and visual grounding. Other methods remain competitive on GQA, VQAv2, and OCR-VQA. Overall, the consistent gains across both benchmarks demonstrate the effectiveness of \method across diverse continual tasks.

Additionally, we investigate whether the gains of \method can be attributed merely to access to external knowledge. To isolate this factor, we augment LoRA-FT and SAME with a vanilla RAG pipeline~\citep{lewis2020retrieval}, giving them access to the same knowledge base as \method during both training and evaluation. Despite this matched knowledge access, \method continues to substantially outperform both RAG-augmented baselines. This result suggests that the improvement does not stem simply from retrieving additional external information. Rather, the advantage lies in how \method operationalizes the retrieved knowledge: instead of treating retrieval as auxiliary context, \method uses it to identify, compose, and reuse instance-relevant capabilities across continual stages. In this way, retrieved knowledge serves not only as additional evidence for prediction, but also as a structured interface for guiding capability selection and transfer throughout continual learning.

\begin{figure}[ht] \centering \vspace{-3mm}\includegraphics[width=\textwidth,pagebox=cropbox]{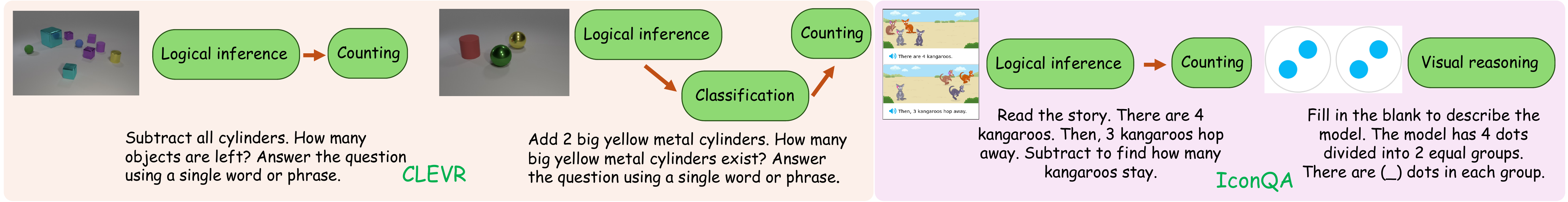} \vspace{-7mm} 
\caption{\small Instance-level capability paths for examples sampled from the UCIT test set. Samples within the same task may require different capability compositions, while shared capabilities can be reused across tasks.}
\label{fig:case-paths} \vspace{-7mm} \end{figure}

\subsection{Further Analysis}
\noindent\textbf{Instance-Level Capability Paths:} Figure~\ref{fig:case-paths} illustrates capability paths derived from retrieved reasoning knowledge. The two CLEVR examples show that different instructions within the same task can require different capability compositions: one follows logical inference \(\rightarrow\) counting, while the other additionally requires classification before counting. The IconQA example shares the logical inference \(\rightarrow\) counting path with CLEVR, showing that capability compositions can also be reused across different tasks. These examples illustrate how reasoning knowledge provides instance-level structure for capability composition and reuse in \method.
\\\noindent\textbf{Continual Retention:}
Figure~\ref{fig:forgetting-a} shows that \method maintains consistently low
forgetting throughout the UCIT stream. It exhibits slight backward transfer
after stages 2 and 3, while average forgetting remains only 0.14, 0.88, and
0.70 after stages 4, 5, and 6, respectively. Compared with the substantially
larger forgetting accumulated by most baselines, these results indicate that
previously learned capabilities remain stable as new stages are introduced.
\begin{figure*}[t]
    \vspace{-3.5mm}
    \centering
    \begin{subfigure}[t]{0.485\textwidth}
        \centering
        \includegraphics[width=\linewidth]{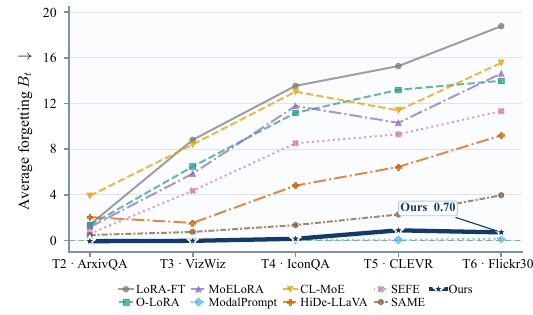}
        \vspace{-5mm}
        \caption{Average forgetting across stages.}
        \vspace{-3mm}
        \label{fig:forgetting-a}
    \end{subfigure}
    \hfill
    \begin{subfigure}[t]{0.485\textwidth}
        \centering
        \includegraphics[width=\linewidth]{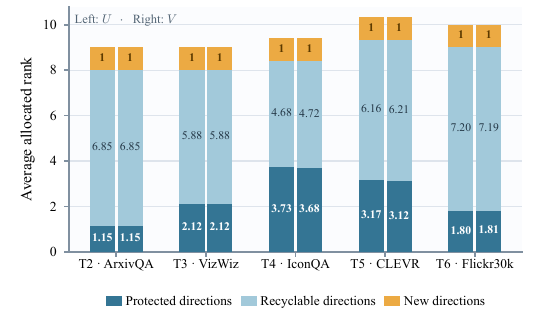}
        \vspace{-5mm}
        \caption{Subspace allocation under adaptive protection.}
        \vspace{-3mm}
        \label{fig:forgetting-b}
    \end{subfigure}
    \caption{\small Continual retention and adaptive subspace allocation on UCIT.
    Panel (a) reports average forgetting after each stage, where lower values
    indicate stronger retention. Panel (b) visualizes the protected,
    recyclable, and newly appended basis directions averaged over recurring
    capability-projection pairs.}
    \label{fig:forgetting}
    \vspace{-6.5mm}
\end{figure*}
\\\noindent\textbf{Subspace Allocation:}
Figure~\ref{fig:forgetting-b} summarizes the realized subspace allocation
across the five updates after the initial stage, averaged over recurring
capability-projection pairs. Since the protection rule in
\Eqref{eq:protected-rank} is applied independently to each capability and
adapted projection, the protected rank varies across updates rather than using
a fixed coordinate count. Despite the high protection ratio \(\rho=0.99\),
the protected prefixes occupy only about 14\%--44\% of the pre-expansion rank,
leaving roughly 56\%--86\% of the existing directions recyclable before new
capacity is added. This shows that historical energy is concentrated in
a compact subspace, allowing \method to preserve dominant historical
directions while retaining substantial capacity for subsequent adaptation.
\\\noindent\textbf{Ablation Study:} Table~\ref{tab:component-ablation} evaluates each component of \method. Removing domain guidance and eliminating reasoning-guided ordering by replacing the sequential path with parallel capability composition both reduce performance, confirming the complementary benefits of generation guidance and ordered capability execution. Disabling adaptive subspace recycling leads to substantial degradation, demonstrating the importance of preserving historical capability information while retaining sufficient capacity for subsequent adaptation.
Removing stage-specific core routing and forcing activated capabilities to use final-stage cores further degrades performance, as cores capture distinct stage-specific realizations, highlighting the need for effective routing in task-free capability reuse.
\\\noindent\textbf{Protection Ratio Sensitivity:} Table~\ref{tab:rho-ablation} evaluates the sensitivity of \method to the historical-energy protection ratio \(\rho\) on UCIT using a held-out validation set, with the realized protected rank indicating how much subspace is preserved under each setting. Lower protection ratios (\(\rho\leq0.90\)) retain only
1.01--1.34 directions on average and yield lower performance than the default
setting. Increasing \(\rho\) to 0.99 raises the protected rank to 2.53 and
achieves the best average performance of 70.67, indicating that most historical
energy is concentrated within a compact subspace. In contrast, full protection
at \(\rho=1.0\) increases the protected rank sharply to 8.37 and reduces
performance to 66.52, as substantially less residual capacity remains for new
adaptation. These results favor a high but non-complete protection ratio that
balances historical retention and plasticity.

\begin{table*}[ht]
\vspace{-3mm}
\begin{minipage}[t]{0.4\textwidth}
\centering
\small
\renewcommand{\arraystretch}{1.08}
\setlength{\tabcolsep}{4pt}
\caption{\small Component ablations of \method on UCIT. Higher is better.}
\vspace{-3mm}
\label{tab:component-ablation}
\begin{tabular}{lc}
\toprule
\textbf{Variant} & \(\boldsymbol{\overline{\mathcal{A}}}\)~$\uparrow$ \\
\midrule
\rowcolor{blue!6}
\method & \textbf{70.65} \\
w/o Domain Guidance & 70.02 \\
w/o Reasoning-Guided Ordering & 70.08 \\
w/o Adaptive Subspace Recycling & 49.88 \\
w/o Stage-specific Core Routing & 31.50 \\
\bottomrule
\end{tabular}
\end{minipage}
\hfill
\begin{minipage}[t]{0.58\textwidth}
\centering
\small
\renewcommand{\arraystretch}{1.08}
\setlength{\tabcolsep}{16pt}
\caption{\small Sensitivity to the historical-energy ratio \(\rho\), with the resulting protected rank.}
\vspace{-3mm}
\label{tab:rho-ablation}
\begin{tabular}{ccc}
\toprule
\(\boldsymbol{\rho}\)
& \textbf{Protected Rank}
& \(\boldsymbol{\overline{\mathcal{A}}}\)~$\uparrow$ \\
\midrule
0.50 & 1.01 & 67.49 \\
0.70 & 1.10 & 66.91 \\
0.90 & 1.34 & 69.33 \\
\rowcolor{blue!6}
0.99 (default) & 2.53 & \textbf{70.67} \\
1.00 & 8.37 & 66.52 \\
\bottomrule
\end{tabular}
\end{minipage}\vspace{-7mm}
\end{table*}

\section{Conclusion}
We presented \method, a retrieval-guided framework for multimodal continual instruction tuning that leverages external knowledge to organize capability composition and reuse across continual stages. Retrieved domain knowledge provides generation guidance, while reasoning knowledge specifies ordered capability operations for instance-level composition, with format knowledge further supporting retrieval and routing. To enable stable cross-stage reuse, adaptive subspace recycling represents recurring capabilities through shared bases and stage-specific cores, protecting high-energy historical directions while retaining residual capacity for subsequent adaptation. Experiments on UCIT and CoIN show that \method consistently outperforms representative baselines while maintaining strong continual retention and effective adaptation to newly observed tasks.\looseness -1
\\\noindent\textbf{Limitations.} The current knowledge base represents external information as concise textual cards and does not model richer sources such as tables, diagrams, and structured knowledge graphs. Extending knowledge construction and retrieval to these forms is a natural direction for future work.

\subsection*{AI Use Statement}
In this work, we used generative AI tools for research execution. Specifically, GPT-4o mini was used within \method for knowledge-base construction, including knowledge-card generation, capability canonicalization, domain-card consolidation, and prompt-guidance generation. Additionally, we used generative AI tools to check language, notation consistency, and presentation clarity throughout the manuscript. We reviewed and verified all AI-assisted work. We take responsibility for the final content of this work, including text, claims, and artifacts produced with the aid of generative AI.

\bibliography{iclr2027_conference}
\bibliographystyle{iclr2027_conference}

\clearpage
\appendix
\section*{\centering Appendix}

\section{Training and Inference Algorithms}
\label{app:algorithms}

Algorithms~\ref{alg:training} and~\ref{alg:inference} summarize the complete procedure for stage-wise training and task-free inference.

\begin{algorithm}[ht]
\caption{Stage-wise training of \method}
\label{alg:training}
\begin{algorithmic}[1]
\REQUIRE Stage data \(\mathcal{D}_t\), knowledge shard \(\mathcal{K}^{t}\), and parameters retained from stages \(1{:}t-1\)
\STATE Update the available knowledge:
\(\mathcal{K}^{1:t}\leftarrow\mathcal{K}^{1:t-1}\cup\mathcal{K}^{t}\)
\FORALL{\((\mathbf v,\mathbf q,\mathbf y)\in\mathcal{D}_t\)}
    \STATE Retrieve one domain, format, and reasoning card
    \STATE Form \(\widetilde{\mathbf q}\) and obtain the capability path
    \(\pi(\mathbf q)=(s_1,\ldots,s_L)\)
\ENDFOR
\STATE Collect the capabilities activated by the current-stage paths
\FORALL{capabilities \(s\) activated at stage \(t\)}
    \IF{\(s\) has a stored historical realization}
        \STATE Apply the QR reparameterization and energy-based rotation in
        \eqref{eq:qr}--\eqref{eq:rotation}
        \STATE Compute \(p_U,p_V\) and expand the active rank by up to \(\delta\)
    \ELSE
        \STATE Initialize the shared bases with active rank \(K_0\)
    \ENDIF
    \STATE Initialize the current-stage cores \(C_{s,t}^{\ell}\)
\ENDFOR
\STATE Optimize the current-stage cores and trainable basis directions using
\eqref{eq:objective} and \eqref{eq:trainable-slices}
\STATE Keep historical cores and protected basis directions fixed
\STATE Store the current capability realizations and their routing signatures
\end{algorithmic}
\end{algorithm}

\begin{algorithm}[ht]
\caption{Task-free inference with \method}
\label{alg:inference}
\begin{algorithmic}[1]
\REQUIRE Query--image pair \((\mathbf q,\mathbf v)\), available knowledge
\(\mathcal{K}^{1:t}\), and stored capability realizations
\STATE Retrieve the domain, format, and reasoning cards
\STATE Form \(\widetilde{\mathbf q}\) and obtain the capability path
\(\pi(\mathbf q)=(s_1,\ldots,s_L)\)
\FOR{\(j=1,\ldots,L\)}
    \STATE Score the stored realizations of \(s_j\) using
    \eqref{eq:core-routing}
    \STATE Select \(\widehat{\tau}_{s_j}\) and use the corresponding per-layer
    cores \(C_{s_j,\widehat{\tau}_{s_j}}^{\ell}\)
\ENDFOR
\STATE Compose the selected capability modules in path order and decode the response
\end{algorithmic}
\end{algorithm}

\section{Knowledge Base Construction}
\label{app:knowledge}

\noindent\textbf{Overview:}
For each continual stage \(j\), we construct a knowledge shard from the training instructions in \(\mathcal{D}_j\). Instructions are first grouped by their inferred answer schemas and then clustered by semantic similarity within each group using the embedding model \(\phi\). For each cluster, representative instructions and externally retrieved evidence, which may provide visual context and background information, are summarized by an LLM into domain, reasoning, and format cards. Domain cards contain reusable knowledge and prompt guidance, reasoning cards specify ordered capability operations and their applicable contexts, and format cards define answer schemas and output constraints. Generated capabilities are canonicalized against the existing capability registry to promote reuse, while semantically compatible domain cards are consolidated to reduce redundancy. Algorithm~\ref{alg:kb-construction} summarizes the complete procedure.

\begin{algorithm}[ht]
\small
\caption{Construction of the knowledge base.}
\label{alg:kb-construction}
\begin{algorithmic}[1]
\REQUIRE Training sets \(\{\mathcal{D}_j\}_{j=1}^{T}\), embedding model \(\phi\), external search, LLM \(\mathcal{L}\), capability registry \(\mathcal{R}\), allowed answer schemas \(\mathcal{Y}\)
\ENSURE Knowledge shards
\(\{\mathcal{K}_{\mathrm{dom}}^j,
\mathcal{K}_{\mathrm{rea}}^j,
\mathcal{K}_{\mathrm{fmt}}^j\}_{j=1}^{T}\)
\FOR{\(j=1,\ldots,T\)}
    \STATE Collect instructions \(\mathcal{Q}_j\) from \(\mathcal{D}_j\)
    \STATE Infer answer schemas and group instructions accordingly
    \STATE Encode instructions with \(\phi\) and cluster semantically similar instructions within each schema group
    \FOR{each instruction cluster \(\mathcal{C}_a\)}
        \STATE Summarize the cluster intent and relevant concepts
         \STATE Retrieve external evidence \(\mathcal{E}_a\) using queries constructed from \\the task name, cluster intent, and concepts
        \STATE Generate domain, reasoning, and format cards using Prompt~A
        \STATE Canonicalize generated capabilities against \(\mathcal{R}\) using Prompt~B
        \STATE Update \(\mathcal{R}\) with retained new capabilities
        \STATE Validate grounding, schema consistency, and path completeness
    \ENDFOR
    \STATE Identify semantically compatible domain-card groups
    \STATE Consolidate compatible domain cards using Prompt~C
    \STATE Generate domain-card prompt guidance using Prompt~D
    \STATE Encode domain and reasoning card representations with \(\phi\)
    \FOR{each instruction cluster \(\mathcal{C}_a\)}
        \STATE
        \(\displaystyle
        \mathbf{p}_a \leftarrow
        \operatorname{norm}\!\left(
        \frac{1}{|\mathcal{C}_a|}
        \sum_{\mathbf{q}_i\in\mathcal{C}_a}
        \phi(\mathbf{q}_i)
        \right)\)
        \STATE Associate \(\mathbf{p}_a\) with its resulting domain and reasoning cards
    \ENDFOR
    \STATE Consolidated domain cards retain the prototypes of all member clusters
    \STATE Store the resulting cards in
    \(\mathcal{K}_{\mathrm{dom}}^j\),
    \(\mathcal{K}_{\mathrm{rea}}^j\), and
    \(\mathcal{K}_{\mathrm{fmt}}^j\)
\ENDFOR
\STATE At stage \(t\), expose only
\(\mathcal{K}_{c}^{1:t}=\bigcup_{j=1}^{t}\mathcal{K}_{c}^{j}\),
\(c\in\{\mathrm{dom},\mathrm{rea},\mathrm{fmt}\}\)
\end{algorithmic}
\end{algorithm}

\subsection{Knowledge-Card Generation}

\noindent\textbf{External Search:} For each instruction cluster, search queries are constructed from the task name, cluster intent, and concepts. The retrieved textual evidence provides relevant background information and may include descriptions of visual context associated with the task and concepts.

\noindent\textbf{Card Generation:} The instruction cluster, retrieved evidence, currently available capabilities, and answer-schema constraints are provided to GPT-4o mini~\citep{openai2024gpt4omini}. Prompt~A generates structured domain, reasoning, and format cards, while Prompts~B-D support capability canonicalization and subsequent card refinement when applicable.

\noindent\textbf{System prompt.}
\begin{lstlisting}
You construct source-grounded knowledge cards for multimodal continual learning.
Generate reusable knowledge and a complete reasoning path grounded in the provided evidence and instruction cluster.
Do not answer any sample instruction.
Return compact valid JSON only.
Prefer an existing allowed capability whenever it performs the same reusable operation.
Introduce a new capability only when the required operation is not covered by the existing capability list.
Capabilities must be atomic, reusable operations rather than task-specific procedures.
Decompose composite reasoning processes into separate capability steps.
Every answer schema must exactly match the allowed schema list.
\end{lstlisting}

\noindent\textbf{User prompt template.}
\begin{lstlisting}
Task: <task_name>
Reasoning intent: <intent>
Schema hint: <schema_hint>
Concept hints: <concept_hints>

Instruction cluster:
- <instruction_1>
- ...
- <instruction_n>

Allowed capabilities:
<capability_list>

Allowed schemas:
<schema_list>

Retrieved evidence:
<source_records>

Return JSON with the following structure:
{
  "domain_cards": [
    {
      "name": "",
      "knowledge": "",
      "concepts": [],
      "keywords": [],
      "source_refs": []
    }
  ],
  "reasoning_cards": [
    {
      "name": "",
      "domains": [],
      "schemas": [],
      "pattern": [
        {
          "capability": "",
          "operation": ""
        }
      ],
      "source_refs": []
    }
  ],
  "format_cards": [
    {
      "schema": "",
      "format_template": "",
      "validation_rule": ""
    }
  ]
}

Requirements:
- Generalize to reusable knowledge rather than instruction-specific facts.
- Do not include final answers to any instruction.
- Domain and reasoning cards must be supported by the retrieved evidence.
- The reasoning path must contain only atomic capabilities.
- The reasoning path must be sufficient to complete the requested reasoning intent.
- Reuse an existing capability whenever its core operation already covers a required step.
- At least one reasoning card and one format card must match the requested schema.
- Include only operations supported by the instruction cluster and retrieved evidence.
- Use the shortest complete reasoning path.
\end{lstlisting}

\subsection{Capability Canonicalization}

Reasoning cards from different instruction clusters may describe the same reusable operation using different names. We therefore compare newly generated capabilities with the existing registry and reuse a canonical capability only when their core operations and output roles are equivalent.

\paragraph{Prompt B: capability canonicalization.}

\noindent\textbf{System prompt.}
\begin{lstlisting}
Determine whether each generated capability represents the same reusable operation as an existing capability.
Use REUSE only when the core operation and output role are substantially equivalent.
Related, broader, narrower, or task-specific operations must remain separate.
Return valid JSON only.
For each candidate, return its capability, decision, canonical_capability, and confidence.
The decision must be either REUSE or KEEP_SEPARATE.
\end{lstlisting}

\noindent\textbf{Input/output template.}
\begin{lstlisting}
Input:
{
  "candidates": [
    {
      "capability": "<generated_capability>",
      "operation": "<operation_description>",
      "schemas": <schemas>
    }
  ],
  "existing_capabilities": [
    {
      "capability": "<canonical_capability>",
      "operation": "<operation_description>"
    }
  ]
}

Output:
{
  "decisions": [
    {
      "capability": "",
      "decision": "",
      "canonical_capability": "",
      "confidence": 0.0
    }
  ]
}
\end{lstlisting}

Only high-confidence \texttt{REUSE} decisions whose canonical capability
already exists in the registry are applied. This canonicalization provides
persistent capability identities across knowledge cards constructed at
different continual stages.

\subsection{Domain-Card Consolidation}

Domain cards from different instruction clusters may encode equivalent reusable knowledge. We first form conservative candidate groups using semantic and embedding compatibility, and then use the LLM to determine whether cards within each group express the same knowledge rule. Cards with distinct rules remain separate even when they share similar terminology or answer formats.

\paragraph{Prompt C: domain-card consolidation.}

\noindent\textbf{System prompt.}
\begin{lstlisting}
Merge Domain knowledge cards only when they express the same semantic intent and the same reusable knowledge rule.
A shared answer schema, reasoning path, name, or superficial vocabulary is not sufficient for merging.
Use representative instructions as the primary evidence when the knowledge description is ambiguous.
Do not invent new knowledge.
Every input card must appear in exactly one output group.
Return valid JSON only.
For each group, return member_uids, name, knowledge, concepts, and keywords.
\end{lstlisting}

\noindent\textbf{Input/output template.}
\begin{lstlisting}
Input:
{
  "cards": [
    {
      "id": "",
      "knowledge": "",
      "concepts": [],
      "semantic_intent": "",
      "representative_instructions": []
    }
  ]
}

Output:
{
  "groups": [
    {
      "member_ids": [],
      "name": "",
      "knowledge": "",
      "concepts": [],
      "keywords": []
    }
  ]
}
\end{lstlisting}

This consolidation reduces redundant domain knowledge while preserving
distinct semantic rules that may share similar terminology or response
formats.

\subsection{Prompt-Guidance Generation}

Each consolidated domain card is converted into concise operational guidance that can be prepended to the retrieved instruction through \eqref{eq:guidance}. Prompt~D restricts the guidance to card-grounded concepts and rules, without introducing sample-specific answers or output-format instructions.

\paragraph{Prompt D: domain-card guidance.}

\noindent\textbf{System prompt.}
\begin{lstlisting}
Create prompt_guidance for every supplied Domain knowledge card.
Ground the guidance only in the supplied knowledge and concepts.
Rewrite the card as one concise operational instruction describing the evidence, concepts, distinctions, or stable rules the model should use.
Do not reveal an answer, rationale, option, or instruction-specific fact.
Do not include answer-format instructions.
Avoid generic advice and use concrete operations only when supported by the card.
Return one concise English paragraph for each card in valid JSON.
\end{lstlisting}

\noindent\textbf{Input/output template.}
\begin{lstlisting}
Input:
{
  "cards": [
    {
      "id": "",
      "knowledge": "",
      "concepts": []
    }
  ]
}

Output:
{
  "items": [
    {
      "id": "",
      "prompt_guidance": ""
    }
  ]
}
\end{lstlisting}

\subsection{Retrieval Representations}

The final knowledge store maintains card-text representations together with instruction-cluster prototypes. Domain-card representations contain names, knowledge, concepts, and keywords, while reasoning-card representations contain applicable domains, schemas, and ordered capability operations. Both are encoded by the frozen embedding model \(\phi\).

For each instruction cluster \(\mathcal{C}_a\), its prototype is obtained by averaging the instruction embeddings followed by \(\ell_2\)-normalization:
\begin{equation}
    \mathbf{p}_a
    =
    \operatorname{norm}\!\left(
    \frac{1}{|\mathcal{C}_a|}
    \sum_{\mathbf{q}_i\in\mathcal{C}_a}
    \phi(\mathbf{q}_i)
    \right),
    \label{eq:cluster-prototype-app}
\end{equation}
where \(\operatorname{norm}(\cdot)\) denotes \(\ell_2\)-normalization. Each domain or reasoning card retains the prototypes of its source instruction clusters; a consolidated domain card may therefore contain multiple prototypes. We denote this set by \(\mathcal{P}(k)\). Retrieval combines prototype similarity with the card-content signals described in Section~\ref{sec:kb} and Appendix~\ref{app:routing}.

\section{Fixed Retrieval and Core-Selection Coefficients}
\label{app:routing}

Table~\ref{tab:routing-coefficients} summarizes the fixed coefficients used by \method across all continual tasks and stages. Domain and reasoning retrieval combine instruction-cluster prototype similarity with card-content matching using \(\alpha_{\mathrm{dom}}=\alpha_{\mathrm{rea}}=0.80\) in \eqref{eq:retrieval}, while format retrieval is performed separately using lexical and answer-schema matching. The same coefficients are used throughout the continual stream.

\noindent\textbf{Retrieval Signals:}
At stage \(t\), retrieval accesses only the knowledge accumulated from observed stages \(1{:}t\). For a domain or reasoning card \(k\), the prototype score is defined as:
\begin{equation}
    s_{\mathrm{proto}}(\mathbf q,k)
    =
    \max_{\mathbf p\in\mathcal P(k)}
    \cos\!\left(
        \phi(\mathbf q),\mathbf p
    \right),
    \label{eq:prototype-score-app}
\end{equation}
where \(\mathcal{P}(k)\) contains the instruction-cluster prototypes associated with card \(k\). In addition to prototype similarity, each card type uses a weighted combination of normalized content-matching signals:
\begin{equation}
    h_c(\mathbf q,k)
    =
    \sum_{m\in\mathcal{M}_c}
    \lambda_{c,m}s_m(\mathbf q,k),
    \qquad
    c\in\{\mathrm{dom},\mathrm{rea},\mathrm{fmt}\},
    \label{eq:content-score-app}
\end{equation}
where \(\mathcal{M}_c\) denotes the signals used for card type \(c\), and \(\lambda_{c,m}\) denotes their fixed weights.

For domain retrieval, the content signals include card-text semantic similarity, keyword overlap, and concept overlap. Card-text similarity captures overall instruction--card relevance, while keyword and concept matching provide lexical and structured compatibility. The resulting content score is combined with prototype similarity according to \eqref{eq:retrieval}, and the highest-scoring domain card is retained.

Format retrieval uses keyword overlap and answer-schema compatibility. Keyword matching captures explicit response cues in the instruction, while schema matching measures compatibility with the answer form represented by the format card. When an instruction explicitly specifies an answer schema, matching format cards are given priority.

After retrieving the domain and format cards, their information is used as additional cues for reasoning retrieval. Reasoning-card matching uses card-text semantic similarity, keyword overlap, domain compatibility, schema compatibility, and concept overlap. The domain and schema signals measure whether a candidate reasoning procedure is applicable to the retrieved context, while the remaining signals measure its semantic and lexical relevance to the instruction. The resulting content score is combined with prototype similarity according to \eqref{eq:retrieval}, and the highest-scoring reasoning card is retained.

\noindent\textbf{Stage-Specific Core Selection:}
For each activated capability \(s\), \method scores the stored stage-specific realizations \(\tau\in\mathcal{T}_s^t\) using four cues in \(\mathcal{G}(\mathbf q,\mathbf v)\). The instruction cue measures similarity between the current instruction and the stored instruction signature of a candidate realization; the metadata cue measures compatibility with the retrieved domain and format information; the image cue measures visual similarity to its stored visual signature; and the path-transition cue measures compatibility with the realization selected for the preceding capability.

Let \(h=(s,\tau)\) denote a stage-specific realization and \(h_{\mathrm{prev}}\) the realization selected for the preceding capability. For non-initial positions in a capability path, the transition cue is computed from the empirical transition frequency as:
\begin{equation}
    P_{\mathrm{trans}}
    \left(
        h\mid h_{\mathrm{prev}}
    \right)
    =
    \frac{
        N\!\left(
            h_{\mathrm{prev}}\!\rightarrow h
        \right)
    }{
        \sum_{h'}
        N\!\left(
            h_{\mathrm{prev}}\!\rightarrow h'
        \right)
    }.
    \label{eq:transition-cue}
\end{equation}
Any unavailable cue is omitted and the remaining coefficients are renormalized. The highest-scoring realization \(\widehat{\tau}_s\) is selected, and its corresponding per-layer cores \(C_{s,\widehat{\tau}_s}^{\ell}\) are used for capability execution.

\begin{table}[ht]
\centering
\small
\setlength{\tabcolsep}{5pt}
\caption{Fixed card-content matching and stage-specific core-selection coefficients used by \method. Signals are listed in the same order as their coefficients.}
\label{tab:routing-coefficients}
\resizebox{\columnwidth}{!}{%
\begin{tabular}{ccc}
\toprule
\textbf{Component} & \textbf{Signals} & \textbf{Coefficients} \\
\midrule
Domain card
& card-text semantic, keyword overlap, concept overlap
& 0.58,\,0.32,\,0.10 \\

Reasoning card
& card-text semantic, keyword overlap, domain match,
  schema match, concept overlap
& 0.48,\,0.27,\,0.10,\,0.10,\,0.05 \\

Format card
& keyword overlap, schema-rule match
& 0.60,\,0.40 \\

Core selection
& instruction cue, domain/format metadata, image cue,
  path transition
& 0.60,\,0.15,\,0.20,\,0.05 \\
\bottomrule
\end{tabular}%
}
\end{table}

\noindent\textbf{Coefficient Design:}
The coefficients are fixed globally rather than adapted to individual tasks or continual stages. Their relative magnitudes follow a simple evidence hierarchy based on how directly each signal matches the current instance to a knowledge card or stored realization. Motivated by prior hybrid-retrieval studies showing that semantic and lexical signals provide complementary relevance evidence~\citep{lee2023complementarity,kalra2025mor}, we use semantic matching as the primary retrieval signal while retaining lexical and structured cues for disambiguation. Prototype similarity receives the dominant interpolation weight \(\alpha=0.80\) because it directly compares the query with the source instruction clusters from which a card is constructed. Within card-content matching, semantic and keyword signals receive larger weights, while concept, domain, and schema signals provide auxiliary compatibility constraints. For core selection, instruction similarity serves as the primary task-free cue, visual similarity provides complementary instance-level evidence, and retrieved metadata and path-transition statistics provide additional context. This shared configuration reflects the heterogeneous instructions, response formats, and visual content encountered in MCIT, and its robustness to alternative weight configurations is evaluated on a held-out validation set in Section~\ref{app:coefficient-sensitivity}.

\subsection{Sensitivity to Retrieval and Routing Coefficients}
\label{app:coefficient-sensitivity}

We further examine the sensitivity of \method to the fixed retrieval and stage-specific core-selection coefficients on UCIT using a held-out validation set. Rather than varying individual coefficients exhaustively, we compare the default configuration with several representative alternatives while keeping the underlying retrieval and routing signals unchanged. For retrieval, we consider balanced weighting across the available evidence and a semantic-emphasized configuration; for core selection, we compare balanced cue weighting with increased emphasis on instruction similarity.

\begin{table}[ht]
\centering
\small
\renewcommand{\arraystretch}{1.08}
\setlength{\tabcolsep}{5pt}
\caption{Sensitivity to retrieval and stage-specific core-selection coefficients on UCIT. Weight tuples follow the signal order in Table~\ref{tab:routing-coefficients}, with \(\alpha=\alpha_{\mathrm{dom}}=\alpha_{\mathrm{rea}}\). Routing accuracy is computed over all routed capability positions.}
\label{tab:coefficient-sensitivity}
\resizebox{\columnwidth}{!}{%
\begin{tabular}{lccccc|cc}
\toprule
\textbf{Setting}
& \(\boldsymbol{\alpha}\)
& \textbf{Domain}
& \textbf{Reasoning}
& \textbf{Format}
& \textbf{Core Selection}
& \textbf{Routing Acc. (\%)}
& \(\boldsymbol{\overline{\mathcal{A}}}\)~$\uparrow$ \\
\midrule

\rowcolor{blue!6}
Default
& 0.80
& 0.58, 0.32, 0.10
& 0.48, 0.27, 0.10, 0.10, 0.05
& 0.60, 0.40
& 0.60, 0.15, 0.20, 0.05
& 99.304
& 70.67 \\

Balanced retrieval
& 0.50
& 0.33, 0.33, 0.34
& 0.20, 0.20, 0.20, 0.20, 0.20
& 0.50, 0.50
& default
& 66.354
& 67.41 \\

Semantic-emphasized retrieval
& 0.90
& 0.65, 0.25, 0.10
& 0.55, 0.20, 0.10, 0.10, 0.05
& default
& default
& 99.141
& 70.34 \\

Balanced routing
& default
& default
& default
& default
& 0.25, 0.25, 0.25, 0.25
& 99.761
& 70.51 \\

Instruction-emphasized routing
& default
& default
& default
& default
& 0.70, 0.10, 0.15, 0.05
& 98.982
& 70.22 \\
\bottomrule
\end{tabular}%
}
\end{table}

Routing accuracy is computed over all routed capability positions. For each position, we compare the stage-specific realization selected by the router with the reference realization associated with the corresponding task and capability. A position is counted as correct only when the two realization identifiers match. If either realization is unavailable, the position is marked as \textit{unknown} and treated as an error. The reported value is the micro accuracy over all routed positions, with \textit{unknown} positions included in the denominator. Task identity is used only to construct this evaluation reference and is never provided to the router.

As shown in Table~\ref{tab:coefficient-sensitivity}, \method remains stable under most variations of the retrieval and core-selection coefficients, with high routing accuracy and only minor changes in downstream performance. The main exception is Balanced retrieval, which assigns approximately equal importance to the available retrieval signals and leads to lower routing accuracy and average performance. Further inspection shows that this degradation is largely associated with \textit{unknown} positions, where the retrieved reasoning path contains a capability for which no corresponding stage-specific realization is available under the evaluation reference. This suggests that the primary failure arises from changes in capability-path retrieval rather than from stage-specific core selection itself. Overall, these results indicate that \method is robust to reasonable coefficient variations, while appropriate weighting of retrieval signals remains important for reliable capability-path construction.

\section{Knowledge Interface and Stage-Wise Execution}
\label{app:interface}

\subsection{Stored Fields and Forward Consumers}

Table~\ref{tab:card-interface} summarizes the typed interface. Only Domain
Guidance modifies the language-model input; reasoning and format records remain
structured control state.

\begin{table}[ht]
\centering
\small
\setlength{\tabcolsep}{4pt}
\caption{Knowledge-card fields and their consumers.}
\label{tab:card-interface}
\resizebox{\columnwidth}{!}{%
\begin{tabular}{cccc}
\toprule
Card & Principal stored fields & Forward consumer & In prompt? \\
\midrule
Domain
& concepts, keywords, knowledge, \textit{prompt guidance}
& generator
& prompt guidance only \\
Reasoning
& domains, schemas, ordered operations
& capability-path constructor
& no \\
Format
& schema, template, validation rule
& reasoning retrieval, core router
& no \\
\bottomrule
\end{tabular}%
}
\end{table}

For nonempty domain guidance \(g_{\mathrm{dom}}\), the augmented instruction is
\begin{equation}
    \widetilde{\mathbf q}=\texttt{[GUIDANCE: }g_{\mathrm{dom}}\texttt{]}\;\mathbf q.
    \label{eq:prompt-serialization}
\end{equation}
Retrieval uses the original \(\mathbf q\), and the prefix contains neither a
sample answer nor a dataset identifier. Removing it therefore leaves retrieval
and routing unchanged. An explicit format request in \(\mathbf q\) is not
duplicated by a retrieved template.

\subsection{Training and Inference Order}

At stage \(t\), \method (1) adds the validated shard to
\(\mathcal{K}^{1:t}\); (2) retrieves and canonicalizes capability paths;
(3) re-expresses recurring adapters, protects high-energy prefixes, expands
their rank, and initializes current cores; (4) optimizes current cores and
unprotected basis slices; and (5) stores multi-signal core-selection
prototypes.

After stage \(t\), inference uses the corresponding cumulative collection
\(\mathcal{K}^{1:t}\). It retrieves the three card types, optionally prepends
Domain Guidance, maps the Reasoning Path to capabilities, selects a
historical core independently for every capability, and activates the selected
updates in path order before autoregressive generation. Neither this procedure
nor the core score in \eqref{eq:core-routing} receives a task label.

For completeness, the stage-wise training objective and parameter masks used by
the algorithms below are
\begin{equation}
    \mathcal{L}_t
    =-
    \sum_{(\mathbf v,\mathbf q,\mathbf y)\in\mathcal{D}_t}
    \log p_{\Theta,\mathcal{U}^{1:t}_{\pi(\mathbf q)}}
    \big(\mathbf y\mid\mathbf v,\widetilde{\mathbf q}\big),
    \label{eq:objective}
\end{equation}
where \(\Theta\) is the frozen backbone and
\(\mathcal{U}^{1:t}_{\pi(\mathbf q)}\) contains the factorized updates
activated by the ordered path. For a recurring capability, the masks are
\begin{equation}
\begin{aligned}
    &C_{c,t}[1{:}K^{+},1{:}K^{+}] &&\text{trainable},\\
    &U_c[:,p_U{+}1{:}K^{+}],\quad
      V_c[p_V{+}1{:}K^{+},:] &&\text{trainable},\\
    &\{C_{c,\tau}\}_{\tau<t},\quad U_c[:,1{:}p_U],\quad
      V_c[1{:}p_V,:] &&\text{frozen}.
\end{aligned}
\label{eq:trainable-slices}
\end{equation}
Only capabilities activated by the current path receive gradients; inactive
capability realizations remain unchanged.

\section{Theoretical Properties of Adaptive Subspace Recycling}
\label{app:derivations}

We establish two properties of adaptive subspace recycling. First, we show that the QR reparameterization, energy-based rotation, and zero-padded rank expansion preserve every historical capability parameterization exactly before subsequent optimization. Second, we show that the protection rule bounds the historical energy exposed to the recyclable subspace. Throughout this section, we follow the notation of Section~\ref{sec:subspace-recycling}. For a fixed capability \(s\) and layer \(\ell\), we omit the capability and layer superscripts on the shared bases \(U_s^\ell\) and \(V_s^\ell\) for clarity. Here, \(\tau\in\mathcal{T}_s^{t-1}\) indexes the previous stages in which capability \(s\) has a stored core.

\subsection{Exact Preservation under Reparameterization}

Low-rank factorizations are not coordinate-unique. For any invertible matrices \(G_U,G_V\in\mathbb{R}^{K_s\times K_s}\):
\begin{equation}
    U C_{s,\tau} V
    =
    (U G_U)
    \left(
        G_U^{-1} C_{s,\tau} G_V^{-1}
    \right)
    (G_V V).
    \label{eq:factorization-gauge}
\end{equation}
Thus, directly protecting individual coordinates of the original bases would depend on the particular factorization. The QR reparameterization and energy-based rotation in \eqref{eq:qr}--\eqref{eq:rotation} instead establish an orthonormal, energy-ordered coordinate system without changing the represented capability parameterization.

\begin{proposition}[Exact preservation under reparameterization]
\label{prop:exact-preservation}
For every historical realization \(\tau\in\mathcal{T}_s^{t-1}\), the QR reparameterization and energy-based rotation preserve the capability parameterization exactly:
\begin{equation}
    \widetilde{U}
    \widetilde{C}_{s,\tau}
    \widetilde{V}
    =
    U C_{s,\tau} V
    =
    A_{s,\tau}^{\ell}.
    \label{eq:exact-preservation}
\end{equation}
Appending new basis directions and zero-padding the historical core also leave \(A_{s,\tau}^{\ell}\) unchanged.
\end{proposition}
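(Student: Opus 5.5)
Direct substitution suffices, combined with the orthogonality of the eigenvector matrices. By \eqref{eq:qr}, $U = Q_U R_U$, and transposing $V^\top = Q_V R_V$ gives $V = R_V^\top Q_V^\top$. Hence
\begin{equation*}
    U C_{s,\tau} V = Q_U \bigl(R_U C_{s,\tau} R_V^\top\bigr) Q_V^\top = Q_U B_{s,\tau} Q_V^\top ,
\end{equation*}
using the definition of $B_{s,\tau}$ stated before \eqref{eq:energy}.

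Next, $M_U$ and $M_V$ in \eqref{eq:energy} are real symmetric positive semidefinite $K_s\times K_s$ matrices. By the spectral theorem, $P_U$ and $P_V$ can be taken square and orthogonal, so $P_UP_U^\top = I$ and $P_VP_V^\top = I$. Substituting \eqref{eq:rotation} gives
\begin{equation*}
    \widetilde{U}\widetilde{C}_{s,\tau}\widetilde{V} = Q_U P_U P_U^\top B_{s,\tau} P_V P_V^\top Q_V^\top = Q_U B_{s,\tau} Q_V^\top ,
\end{equation*}
which equals $UC_{s,\tau}V = A^{\ell}_{s,\tau}$ by the first step. One subtlety needs stating: $P_U$ and $P_V$ must be full orthogonal matrices, including eigenvectors for zero eigenvalues, and not truncated. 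Equivalently, the identity is an instance of the gauge relation \eqref{eq:factorization-gauge}.

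The main obstacle is the reduced QR step, where $R_U$ and $R_V$ must be square $K_s\times K_s$. This requires $K_s \le \min(d_{\mathrm{out}}, d_{\mathrm{in}})$, which holds in the regime considered ($K_s\le K_{\max}\ll d$). Note that invertibility of $R_U$ and $R_V$ is not actually needed. The first step used only the factorizations $U=Q_UR_U$ and $V=R_V^\top Q_V^\top$, and never an inverse. So the argument goes through even if $U$ or $V$ is rank deficient. In that case $B_{s,\tau}$ simply absorbs the degeneracy, and the later energy analysis is unaffected.

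For expansion from $K^-$ to $K^+$, write the new bases as $[\widetilde{U}\;U_{\mathrm{new}}]$ and $\begin{bmatrix}\widetilde{V}\\ V_{\mathrm{new}}\end{bmatrix}$, and the zero-padded core as $\begin{bmatrix}\widetilde{C}_{s,\tau} & 0\\ 0 & 0\end{bmatrix}$. Block multiplication gives
\begin{equation*}
    \widetilde{U}\widetilde{C}_{s,\tau}\widetilde{V} + \widetilde{U}\,0\,V_{\mathrm{new}} + U_{\mathrm{new}}\,0\,\widetilde{V} + U_{\mathrm{new}}\,0\,V_{\mathrm{new}} = \widetilde{U}\widetilde{C}_{s,\tau}\widetilde{V} .
\end{equation*}
Every term involving the appended directions vanishes, whatever values $U_{\mathrm{new}}$ and $V_{\mathrm{new}}$ take. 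Combined with the previous paragraph, this shows $A^{\ell}_{s,\tau}$ is unchanged by expansion, before any further training.
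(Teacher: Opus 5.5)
Your proof is correct and is essentially the paper's own argument: substitute the QR factors and rotations, use $P_UP_U^\top=P_VP_V^\top=I$ to collapse $\widetilde{U}\widetilde{C}_{s,\tau}\widetilde{V}$ to $Q_UB_{s,\tau}Q_V^\top=UC_{s,\tau}V$, and then handle the zero-padded expansion by block multiplication. Your extra remarks are valid details that the paper leaves implicit: $P_U,P_V$ must be square and untruncated, $K_s\le\min(d_{\mathrm{out}},d_{\mathrm{in}})$, and $R_U,R_V$ need not be invertible for the direct computation. One caveat: reading the identity as an instance of the gauge relation does require $R_U,R_V$ to be invertible, so your ``equivalently'' holds only in that case.
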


\begin{proof}
Using \eqref{eq:qr}--\eqref{eq:rotation} and the orthogonality of \(P_U\) and \(P_V\):
\begin{align}
    \widetilde{U}
    \widetilde{C}_{s,\tau}
    \widetilde{V}
    &=
    Q_U P_U
    \left(
        P_U^\top B_{s,\tau} P_V
    \right)
    P_V^\top Q_V^\top \\
    &=
    Q_U B_{s,\tau} Q_V^\top \\
    &=
    Q_U R_U
    C_{s,\tau}
    R_V^\top Q_V^\top \\
    &=
    U C_{s,\tau} V
    =
    A_{s,\tau}^{\ell}.
\end{align}
Hence, the QR reparameterization and energy-based rotation change only the coordinate representation.

For the expansion from \(K^-\) to \(K^+\), define:
\begin{equation}
    U^{+}
    =
    \left[
        \widetilde{U}\;\;U_{\mathrm{new}}
    \right],
    \qquad
    V^{+}
    =
    \begin{bmatrix}
        \widetilde{V}\\
        V_{\mathrm{new}}
    \end{bmatrix},
    \qquad
    C_{s,\tau}^{+}
    =
    \begin{bmatrix}
        \widetilde{C}_{s,\tau} & 0\\
        0 & 0
    \end{bmatrix}.
    \label{eq:expanded-parameterization}
\end{equation}
Then:
\begin{equation}
    U^{+}C_{s,\tau}^{+}V^{+}
    =
    \widetilde{U}
    \widetilde{C}_{s,\tau}
    \widetilde{V}
    =
    A_{s,\tau}^{\ell}.
    \label{eq:rank-expansion-preservation}
\end{equation}
Therefore, both coordinate reparameterization and zero-padded rank expansion preserve every historical capability parameterization exactly.
\end{proof}

\subsection{Residual-Energy Bound}

The protection rule in \eqref{eq:protected-rank} selects the smallest prefixes that retain at least a fraction \(\rho\) of the accumulated historical energy. Define the corresponding projection matrices:
\begin{equation}
    \Pi_U
    =
    P_U[:,1{:}p_U]P_U[:,1{:}p_U]^\top,
    \qquad
    \Pi_V
    =
    P_V[:,1{:}p_V]P_V[:,1{:}p_V]^\top.
    \label{eq:protected-projectors}
\end{equation}

\begin{proposition}[Bounded residual historical energy]
\label{prop:residual-energy}
The aggregate historical energy outside the protected left and right subspaces is bounded by a fraction \(1-\rho\) of the corresponding total historical energy:
\begin{align}
    \sum_{\tau\in\mathcal{T}_s^{t-1}}
    \left\|
        (I-\Pi_U)B_{s,\tau}
    \right\|_F^2
    &\leq
    (1-\rho)
    \sum_{\tau\in\mathcal{T}_s^{t-1}}
    \|B_{s,\tau}\|_F^2,
    \label{eq:left-residual-bound}\\
    \sum_{\tau\in\mathcal{T}_s^{t-1}}
    \left\|
        B_{s,\tau}(I-\Pi_V)
    \right\|_F^2
    &\leq
    (1-\rho)
    \sum_{\tau\in\mathcal{T}_s^{t-1}}
    \|B_{s,\tau}\|_F^2.
    \label{eq:right-residual-bound}
\end{align}
\end{proposition}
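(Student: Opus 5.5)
The plan is to rewrite both sides of the left bound \eqref{eq:left-residual-bound} as traces against the energy matrix \(M_U\) from \eqref{eq:energy}, and then use the eigendecomposition \(M_U=P_U\Lambda_U P_U^\top\) together with the definition of \(p_U\) in \eqref{eq:protected-rank}. The right bound \eqref{eq:right-residual-bound} will follow by the same argument with transposes, using \(M_V\) and \(\Pi_V\).

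\textbf{Step 1 (trace form of the residual).} Since \(\Pi_U\) is a symmetric idempotent projector, \(I-\Pi_U\) is one as well. Hence
\[
\|(I-\Pi_U)B_{s,\tau}\|_F^2=\operatorname{tr}\big((I-\Pi_U)B_{s,\tau}B_{s,\tau}^\top\big).
\]
Summing over \(\tau\in\mathcal{T}_s^{t-1}\) and using linearity of the trace, the left-hand side becomes \(\operatorname{tr}((I-\Pi_U)M_U)\). In the same way, the total energy is
\[
\sum_\tau\|B_{s,\tau}\|_F^2=\operatorname{tr}(M_U)=\sum_j\lambda_{U,j}.
\]

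\textbf{Step 2 (diagonalize).} \(P_U\) is orthogonal, because \(M_U\) is a symmetric PSD \(K_s\times K_s\) matrix and we may take the full eigenbasis. Therefore \(I-\Pi_U=P_U[:,p_U{+}1{:}]P_U[:,p_U{+}1{:}]^\top\). Substituting \(M_U=P_U\Lambda_UP_U^\top\) and using cyclicity of the trace gives
\[
\operatorname{tr}((I-\Pi_U)M_U)=\sum_{j>p_U}\lambda_{U,j}.
\]

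\textbf{Step 3 (apply the rule).} By \eqref{eq:protected-rank}, \(\sum_{j\le p_U}\lambda_{U,j}\ge\rho\sum_j\lambda_{U,j}\). Consequently
\[
\sum_{j>p_U}\lambda_{U,j}\le(1-\rho)\sum_j\lambda_{U,j},
\]
which is exactly \eqref{eq:left-residual-bound}. For the right bound, write \(\|B(I-\Pi_V)\|_F^2=\operatorname{tr}((I-\Pi_V)B^\top B)\) and repeat Steps 2 and 3 with \(M_V\). Note that \(\operatorname{tr}(M_V)=\sum_\tau\|B_{s,\tau}\|_F^2\) also holds, so both bounds are stated relative to the same total energy.

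\textbf{Main obstacle.} The only delicate point is the degenerate case where the total historical energy is zero. Then the ratio in \eqref{eq:protected-rank} is undefined. However, every \(B_{s,\tau}=0\) in that case, so both sides of each inequality are zero and the bound holds trivially for any choice of \(p_U\) and \(p_V\). Apart from this, one only needs to check that \(P_U\) is a full orthogonal matrix, so that \(I-\Pi_U\) is the complementary projector; the spectral theorem for symmetric matrices guarantees this.
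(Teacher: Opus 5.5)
Your proof is correct and follows the paper's argument essentially step for step. Like the paper, you rewrite the summed residual as \(\operatorname{tr}[(I-\Pi_U)M_U]=\sum_{j>p_U}\lambda_{U,j}\), compare it with \(\operatorname{tr}(M_U)=\sum_\tau\|B_{s,\tau}\|_F^2\), invoke the definition of \(p_U\), and then repeat the argument with \(M_V\); your explicit handling of the zero-total-energy case, where the ratio in the protection rule is undefined, is a small rigor point the paper leaves implicit.
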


\begin{proof}
From \eqref{eq:energy}:
\begin{equation}
    \operatorname{tr}(M_U)
    =
    \sum_{\tau\in\mathcal{T}_s^{t-1}}
    \|B_{s,\tau}\|_F^2
    =
    \sum_j\lambda_{U,j}.
\end{equation}
Since \(Q_U\) and \(Q_V\) have orthonormal columns:
\begin{equation}
    \|B_{s,\tau}\|_F
    =
    \|Q_U B_{s,\tau}Q_V^\top\|_F
    =
    \|A_{s,\tau}^{\ell}\|_F,
    \label{eq:parameterization-energy-equivalence}
\end{equation}
so the energy measured in the QR coordinates equals that of the corresponding historical capability parameterization.

For the left subspace:
\begin{align}
    \sum_{\tau\in\mathcal{T}_s^{t-1}}
    \left\|
        (I-\Pi_U)B_{s,\tau}
    \right\|_F^2
    &=
    \operatorname{tr}
    \left[
        (I-\Pi_U)M_U
    \right] \\
    &=
    \sum_{j>p_U}\lambda_{U,j}.
\end{align}
By the definition of \(p_U\) in \eqref{eq:protected-rank}:
\begin{equation}
    \sum_{j>p_U}\lambda_{U,j}
    \leq
    (1-\rho)
    \sum_j\lambda_{U,j},
\end{equation}
which proves \eqref{eq:left-residual-bound}. Applying the same argument to \(M_V\) proves \eqref{eq:right-residual-bound}.
\end{proof}

Proposition~\ref{prop:residual-energy} gives a direct interpretation of \(\rho\): each protected subspace captures at least a fraction \(\rho\) of the accumulated historical energy, leaving at most a fraction \(1-\rho\) exposed to recycling. Adaptive subspace recycling therefore preserves dominant historical directions while retaining low-energy directions as trainable capacity for subsequent adaptation.

\section{Sensitivity to Rank Expansion}
\label{app:rank-expansion}

\begin{table}[ht]
\centering
\small
\renewcommand{\arraystretch}{1.08}
\caption{Sensitivity to rank expansion on UCIT. \(K_{\mathrm{real}}^{\max}\) denotes the largest realized active rank, and Trainable Params reports the trainable parameter count under each setting. \(\Delta\) is relative to the default \(\delta=1\). Best task-wise and average results are bolded.}
\label{tab:rank-expansion}
\resizebox{\columnwidth}{!}{%
\begin{tabular}{ccccccccccc}
\toprule
$\boldsymbol{\delta}$
& $\boldsymbol{K_{\mathrm{real}}^{\max}}$
& \textbf{Trainable Params (M)}
& \textbf{ImgNet-R}
& \textbf{ArxivQA}
& \textbf{VizWiz}
& \textbf{IconQA}
& \textbf{CLEVR}
& \textbf{Flickr30k}
& \(\boldsymbol{\overline{\mathcal{A}}}\)~$\uparrow$
& $\boldsymbol{\Delta}$ \\
\midrule
0 & 8 & 22.176
& \textbf{85.87} & \textbf{93.00} & 60.70 & 65.53 & 59.27 & 55.97
& 70.06 & $-0.61$ \\

\rowcolor{blue!6}
1 (default) & 11 & 25.340
& 85.57 & 92.60 & 60.56 & 68.53 & 60.57 & 56.09
& 70.67 & 0.00 \\

2 & 14 & 28.491
& 85.77 & 92.80 & 60.23 & 69.27 & 61.87 & 56.40
& 71.06 & +0.39 \\

3 & 17 & 31.879
& 85.47 & 92.67 & 60.66 & 70.33 & 62.23 & \textbf{56.53}
& 71.32 & +0.65 \\

4 & 20 & 35.145
& 85.80 & 92.87 & \textbf{60.86} & \textbf{71.10}
& \textbf{62.47} & 56.38 & \textbf{71.58} & +0.91 \\
\bottomrule
\end{tabular}%
}
\end{table}

To examine the effect of rank expansion in adaptive subspace recycling, we vary the number of newly appended basis directions \(\delta\in\{0,1,2,3,4\}\) on UCIT using a held-out validation set. As shown in Table~\ref{tab:rank-expansion}, increasing \(\delta\) consistently improves average validation performance while increasing both the realized active rank and trainable parameter count. The gains are mainly observed on later tasks such as IconQA and CLEVR, while performance on earlier tasks remains relatively stable, suggesting that additional rank capacity primarily supports subsequent adaptation without substantially compromising previously learned capabilities. Although larger expansion budgets achieve further gains, they also incur higher parameter costs. We therefore use \(\delta=1\) as the default to balance continual adaptation performance and parameter efficiency.

\section{Computational Efficiency and Overhead}
\label{app:runtime}

\subsection{Training Efficiency}

We evaluate the computational efficiency of \method on UCIT.
For a focused comparison, we include LoRA-FT as a standard parameter-efficient
fine-tuning baseline, together with SAME, HiDe-LLaVA, and SEFE, which are the
three strongest baselines in terms of final average performance on UCIT.
We compare trainable parameter footprint and training overhead using
device-hours per task and per-device training throughput. The trainable
parameter count is measured at each continual stage and averaged over all
stages. Device-hours per task are computed as
\begin{equation}
    \mathrm{Device\!-\!h/task}
    =
    \frac{\sum_{t=1}^{T} N_t T_t^{\mathrm{train}}}{3600T},
    \label{eq:accelerator-hours}
\end{equation}
where \(T\) is the number of continual stages, \(N_t\) is the number of
devices used at stage \(t\), and \(T_t^{\mathrm{train}}\) is the corresponding
training time in seconds. Per-device training throughput is computed over the
complete continual-learning stream using the accumulated device time across
all stages.

\begin{table}[t]
\centering
\small
\renewcommand{\arraystretch}{1.08}
\setlength{\tabcolsep}{8pt}
\caption{Trainable-parameter footprint and training efficiency on UCIT. Lower is better for trainable-parameter count and device-hours, whereas higher is better for throughput.}
\label{tab:efficiency}
\resizebox{\columnwidth}{!}{%
\begin{tabular}{lccc}
\toprule
\textbf{Method} &
\textbf{Trainable Params (M)} $\downarrow$ &
\textbf{Device-h/task} $\downarrow$ &
\textbf{Train samples/s/dev.} $\uparrow$ \\
\midrule
SAME~\citep{xie2026same} & 73.286 & 7.12 & 1.391 \\
HiDe-LLaVA~\citep{guo2025hidellava} & 29.360 & 3.94 & 2.510 \\
SEFE~\citep{chen2025sefe} & 340.795 & 8.30 & 1.192 \\
LoRA-FT~\citep{hu2022lora} & 239.862 & {3.62} & {2.735} \\
\midrule
\rowcolor{blue!6}
\method (Ours) & {25.340} & 6.48 & 1.529 \\
\bottomrule
\end{tabular}
}%
\vspace{-3mm}
\end{table}

\begin{table}[ht]
\centering
\small
\renewcommand{\arraystretch}{1.08}
\caption{Inference efficiency on UCIT. Higher throughput and lower TTFT and
peak memory are better.}
\label{tab:inference-efficiency}
\resizebox{\columnwidth}{!}{%
\begin{tabular}{lcccc}
\toprule
\textbf{Method} & \textbf{Tokens/s}~$\uparrow$ & \textbf{TTFT (ms)}~$\downarrow$
& \textbf{Samples/s}~$\uparrow$ & \textbf{Peak memory (GB)}~$\downarrow$ \\
\midrule
Zeroshot & 11.4597 & 198.89 & 2.0963 & 14.23 \\
LoRA-FT~\citep{hu2022lora} & 6.2338 & 246.08 & 1.4529 & 15.48 \\
O-LoRA~\citep{wang2023orthogonal} & 8.8487 & 223.00 & 2.0312 & 28.33 \\
MoELoRA~\citep{chen2024coin} & 1.8979 & 619.01 & 0.4898 & 15.44 \\
ModalPrompt~\citep{zeng2025modalprompt} & 9.9822 & 269.88 & 1.7951 & 29.06 \\
CL-MoE~\citep{huai2025clmoe} & 2.4209 & 499.94 & 0.5511 & 14.83 \\
HiDe-LLaVA~\citep{guo2025hidellava} & 2.4283 & 594.72 & 0.6441 & 15.44 \\
SEFE~\citep{chen2025sefe} & 6.8335 & 230.57 & 1.3403 & 15.69 \\
SAME~\citep{xie2026same} & 3.0351 & 461.36 & 0.7373 & 29.43 \\
\midrule
\rowcolor{blue!6}
\method (Ours)& 5.3755 & 291.92 & 1.2086 & 14.73 \\
\bottomrule
\end{tabular}%
}
\end{table}

As shown in Table~\ref{tab:efficiency}, \method has the smallest trainable parameter footprint among all compared methods, optimizing only 25.340M parameters on average across continual stages. In particular, among the three strongest baselines on UCIT, \method uses fewer trainable parameters than HiDe-LLaVA, SAME, and SEFE. It also requires fewer device-hours and achieves higher training throughput than SAME and SEFE, while HiDe-LLaVA remains more efficient in terms of training time and throughput. LoRA-FT similarly trains faster, but does so with a substantially larger trainable parameter footprint. These results show that \method does not uniformly minimize every measure of training cost; rather, its main efficiency advantage lies in substantially reducing the number of optimized parameters while maintaining moderate training-time overhead.

\subsection{Inference Efficiency}

We compare inference efficiency with baselines on UCIT using token
throughput, time to first token (TTFT), sample throughput, and peak memory.
Token and sample throughput measure decoding and sample-level processing speed,
respectively, while TTFT reflects response latency and peak memory records
the maximum memory usage during inference. As shown in
Table~\ref{tab:inference-efficiency}, \method achieves 5.38 tokens/s and 1.21
samples/s, with a TTFT of 291.92\,ms and peak memory usage of 14.73\,GB.
Among the strongest-performing baselines, \method is more efficient than SAME
across all four metrics, while SEFE achieves higher throughput and lower
latency but requires more memory. \method also outperforms HiDe-LLaVA,
MoELoRA, and CL-MoE in both throughput and latency. Several other baselines,
including LoRA-FT, O-LoRA, and ModalPrompt, achieve higher throughput or lower
latency, but with higher peak memory usage. Notably, \method has the lowest
peak memory usage among all evaluated continual-learning methods. Overall,
these results show that \method maintains competitive inference efficiency
with low memory overhead.

\end{document}